\DeclareMathOperator*{\argmax}{arg\,max}
\theoremstyle{plain}
\newtheorem{theorem}{Theorem}[section]
\newtheorem{proposition}[theorem]{Proposition}
\theoremstyle{definition}
\theoremstyle{remark}
\definecolor{mine}{RGB}{98,98,98} %
\definecolor{td3}{RGB}{55,90,113} 
\definecolor{bear}{RGB}{221,191,93} 
\newcolumntype{d}[1]{D{.}{.}{#1}}
\definecolor{numbercolor}{rgb}{0.5,0.5,0.5}
\definecolor{codebackground}{RGB}{251, 251, 251}
\definecolor{commentscolor}{RGB}{100,100,100}
\lstdefinestyle{mystyle}{
  backgroundcolor=\color{codebackground}, commentstyle=\color{commentscolor},
  keywordstyle=\color{blue},
  numberstyle=\tiny\color{numbercolor},
  basicstyle=\fontsize{7}{10}\ttfamily,
  breakatwhitespace=false,
  breaklines=true,                 
  captionpos=b,                    
  keepspaces=true,                 
  numbers=left,                    
  numbersep=6pt,                  
  showspaces=false,                
  showstringspaces=false,
  showtabs=false,                  
  tabsize=4,
  otherkeywords={norm, .mse_loss, .relu, .actor, clone, detach, repeat, requires_grad_, .rand, .size, .mean, True, .to, .critic, .grad}
}
\title{
Robust Offline Reinforcement Learning\\ with Gradient Penalty and Constraint Relaxation
}
\author{
Chengqian Gao$^1$, Ke Xu$^{2}$, Liu Liu$^{2}$, Deheng Ye$^{2}$, Peilin Zhao$^{2}$, Zhiqiang Xu$^{1}$,  \\
$^{1}$ MBZUAI \\ 
$^{2}$ Tencent AI Lab\\
\texttt{chengqian.gao@mbzuai.ac.ae} \\ 
\texttt{\{kaylakxu, leonliuliu, dericye, masonzhao\}@tencent.com}\\ \texttt{zhiqiangxu2001@gmail.com}\\
}
\begin{document}
\maketitle

\begin{abstract}
A promising paradigm for offline reinforcement learning (RL) is to constrain the learned policy to stay close to the dataset behaviors, known as policy constraint offline RL. 
However, existing works heavily rely on the purity of the data, exhibiting performance degradation or even catastrophic failure when learning from \textit{contaminated datasets} containing impure trajectories of diverse levels. e.g., expert level, medium level, etc., while offline contaminated data logs exist commonly in the real world. 
To mitigate this, we first introduce \textit{gradient penalty} over the learned value function to tackle the exploding Q-functions. We then relax the closeness constraints towards non-optimal actions with \textit{critic weighted constraint relaxation}. 
Experimental results show that the proposed techniques effectively tame the non-optimal trajectories for policy constraint offline RL methods, evaluated on a set of contaminated D4RL Mujoco and Adroit datasets. 
\end{abstract}

\section{Introduction}



Effective offline reinforcement learning (RL) should be able to extract policies with the maximum possible utility out of the static demonstrations without interacting with the environment \cite{lange2012batch, fujimoto2019off, levine2020offline}. One typical way of offline RL is to use policy constraint, enforcing the learned policy to stay close to the behavior policy that generated the dataset, involving various closeness metrics \cite{fujimoto2019off, kumar2019stabilizing, wu2019behavior, kostrikov2021offline}. 

However, we find many policy constrained offline RL methods suffer \textit{performance degradation} and even \textit{catastrophic failure} (please see Figure~\ref{fig:catastrophic_failue}) when trained on datasets containing different levels of policy trajectories. 
For example, methods in Figure \ref{fig:performance_degradation} show better performance on the expert dataset while achieving lower scores on the expert-medium dataset. This is undesired as the medium-expert datasets contain more dynamics, i.e., both expert and medium-level data \cite{fu2020d4rl}. 

\begin{wrapfigure}{r}{0.46\textwidth}
    \centering
    \vspace{-0.5cm}
    \includegraphics[width=\linewidth]{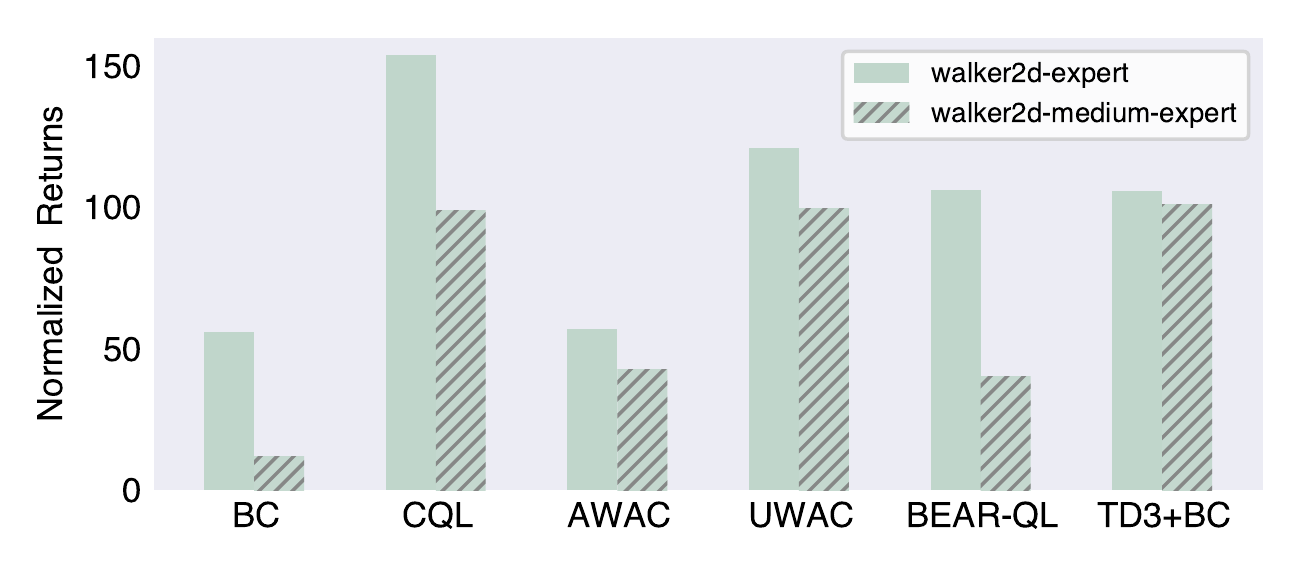}
    \vspace{-0.9cm}
    \caption{Performance degradation. We report the paper results of CQL~\cite{kumar2020conservative}, AWAC~\cite{nair2020accelerating}, UWAC~\cite{wu2021uncertainty}, TD3+BC~\cite{fujimoto2021minimalist}. For BEAR-QL~\cite{kumar2019stabilizing}, we report the result from D4RL~\cite{fu2020d4rl}.\vspace{-0.5cm}}
    \label{fig:performance_degradation}
\end{wrapfigure} 

In fact, many real-world applications demand robust offline RL algorithms, such as robotic controlling tasks with datasets for multiple tasks or incomplete demonstrations \cite{sun2019adversarial, fu2020d4rl}, recommendation tasks with datasets containing non-user logs~\cite{gunes2014shilling, huang2021data}, and autonomous driving tasks with trajectories with various levels. In these cases, the dataset contains expert demonstrations and trajectories from non-experts who have not mastered the task~\cite{tangkaratt2018improving}. Filtering out non-expert trajectories with human effort is either expensive or impossible, necessitating robust offline RL algorithms.


\begin{figure}[ht]
    \centering
    \includegraphics[width=\linewidth]{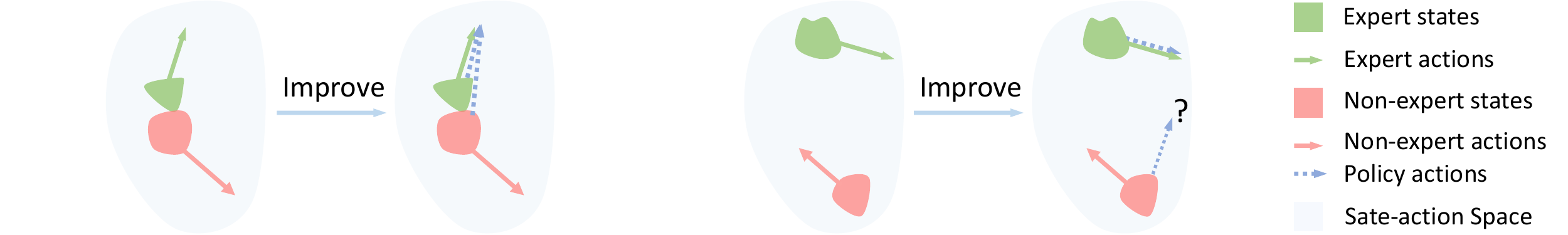}
    \vspace{-0.4cm}
    \caption{Non-expert demonstrations inhibit in two different ways. For non-expert states overlap with expert dataset states(left), they influence the policy improvement in a supervised fashion. For non-expert states far aways from expert dataset states, policy improvement may lead to OOD actions.}
    \vspace{-0.5cm}
    \label{fig:state_overlap}
\end{figure}
Why do the observed performance degradation and catastrophic failure occur? 
To answer this question, we first introduce \textit{contaminated datasets}, which contain trajectories from both expert and non-expert behavior policies, including \textit{expert-medium}, \textit{expert-cloned}, and \textit{expert-random} datasets. 
By analyzing the learning behaviors on such datasets, we identify two key paths by which non-expert trajectories inhibit policy constraint based offline RL.

First, non-expert data can inhibit policy constraint based offline RL in a supervised manner (see Figure \ref{fig:state_overlap}, left). 
The closeness constraint explicitly regresses the policy to both expert and non-expert actions, leading to a compromised policy when the states visited by expert and non-expert policies show significant overlaps \cite{wu2019imitation}. 
To tackle this issue, we propose critic weighted constraint relaxation (\textbf{+ CR}), which leverages a polished Q-function to relax the harmful closeness constraint towards non-expert actions.

A more important finding of this work is that non-expert trajectories may destroy the learned Q-function via out-of-distribution (OOD) actions (Figure \ref{fig:state_overlap}, right). 
Policy improvements on the contaminated dataset make the learned policy closer to dataset expert actions while moving it away from non-expert decisions. 
This implicitly leads to the failed closeness constraint on non-expert states when expert and non-expert states follow different distributions. 
Such failed closeness constraints may result in OOD actions and in turn give rise to unstable Q-values (Theorem \ref{theo:constraintMDP}), sharp Q-function gradients, and finally catastrophic failures (Figure \ref{fig:catastrophic_failue}). We thus introduce the gradient penalty technique (\textbf{+ GP}) to suppress the observed sharp Q-function gradients induced from the failed closeness constraint. To justify the proposed GP technique, we show that there should be an upper bound for the norm of (optimal) Q-function gradients (Theorem \ref{theo:lipschitzness}). 

We integrate the proposed two techniques on the top of BEAR-QL \cite{kumar2019stabilizing} and TD3+BC \cite{fujimoto2021minimalist}, attaining BEAR++ and TD3BC++. 
Evaluations on the contaminated datasets for D4RL mujoco and adroit tasks demonstrate that the proposed techniques together could serve as a general plugin to tame the policy constrained offline RL algorithms. 

\section{Preliminaries} 
\paragraph{RL.}
A Markov decision process (MDP) can be represented by $M=\langle \mathcal{S}, \mathcal{A}, T, d_0, r, \gamma \rangle$, with state space $\mathcal{S}$, action space $\mathcal{A}$, transition probability $T(s_{t+1} | s_t, a_t)$, initial state distribution $d_0$, reward function $r(s_t, a_t)$, and discount factor $\gamma$. 
RL methods aim to find a policy $\pi(a_t | s_t)$, to maximize the expected (discounted) cumulative reward $ \mathbb{E}_{\tau \sim p_{\pi}(\tau)} \Big[ \sum_{t=0}^{|\tau|} \gamma^t r(s_t, a_t)\Big]$, with the trajectory distribution $p_{\pi}(\tau) = d_0(s_0) \prod_{t=0}^{|\tau|} \pi(a_t|s_t) T(s_{t+1}| s_t, a_t)$. 


\paragraph{Offline RL.}
Offline RL algorithms aim to obtain policies from static set of interactions, $D=\{(s_t, a_t, s_{t+1}, r(s_t,a_t))\}_{t=0}^N$. 
One main challenge in offline RL is the distribution shift issues \cite{fu2019diagnosing} or the extrapolation error \cite{fujimoto2019off} during training. 
For example, a Q-function is trained on \textit{dataset actions} $ \mu(a_t | s_t)$ but is evaluated on \textit{policy actions} $\pi(a_{t+1} | s_{t+1})$:
\begin{equation}
    \label{policy_evaluation}
    Q^{k+1}(s_t, a_t) = \operatornamewithlimits{\mathbb{E}}_{s_t,a_t,r,s_{t+1} \sim \mathcal{D}} [r + \gamma Q^{k}(s_{t+1}, \pi(a|s_{t+1}))]
\end{equation}
The learned policy may generate \textit{out-of-distribution} (OOD) actions that differ from the dataset action since its optimization objective makes no other guarantee except generating high-value actions:
\begin{equation}
    \label{policy_improvement}
    \pi^{k+1} = \argmax_{\pi} Q(s_t, \pi(s_t))
\end{equation}
Policy improvements implicitly drive the policy to explore OOD actions \cite{hu2021actor}, and policy evaluation exploits these OOD actions and in turn affects the policy improvement.

\paragraph{Policy constrained offline RL.}
One avenue towards offline RL is enforcing the learned policy to stay close to the behavior policy that generated the dataset:
\begin{equation}
    \pi^{k+1} = \argmax_{\pi} Q(s_t, \pi(s_t)), \qquad s.t. \operatorname{closeness \ constraint}
\end{equation}
Without loss of generality, we study and try to address the performance degradation and catastrophic failure issues in two policy constraint based offline RL algorithms, TD3+BC \cite{fujimoto2021minimalist} and BEAR-QL \cite{kumar2019stabilizing}. 

TD3+BC adds a behavior cloning term on the top of TD3 \cite{fujimoto2018addressing}, resulting in:
\begin{equation}
    \begin{aligned}
    \label{eq:td3_bc}
    \pi = \argmax_{\pi} \mathbb{E}_{(s_t,a_t)\sim \mathcal{D}} \Big[ 
    \frac{\alpha}{\mathbb{E}[|Q(s_t, a_t)|]} Q(s_t, \pi(s_t)) - (\pi(s_t) - a_t)^2\Big],
\end{aligned}
\end{equation}
where $\alpha$ is a hyperparameter controlling the strength of the regularizer.

BEAR-QL constrains the learned policy to have non-negligible support under the data distribution:
\begin{equation}
\begin{aligned}
    \pi = \argmax_{\pi} \mathbb{E}_{s_t \sim \mathcal{D}} \Big[
    Q(s_t, \pi(s_t)) \Big] \operatorname{ s.t. } \ \mathbb{E}_{s_t\sim \mathcal{D}} [\operatorname{MMD} (\beta(s_t), \pi(s_t))] \leq \epsilon, 
\end{aligned}
\end{equation}
with $\beta$ for approximating the behavior policy and $\epsilon=0.05$ for a threshold.

\section{Catastrophic failure happens with exploding Q-gradients}
Policy constrained offline RL methods fail to learn meaningful policies on contaminated datasets that contain significantly multi-modal state distributions, e.g., expert-cloned, expert-random. We call this catastrophic failure as it happens with a destroyed Q-function with exploding gradients.

\begin{wrapfigure}{r}{0.54\textwidth}
    \centering
    \vspace{-0.5cm}
    \includegraphics[width=\linewidth]{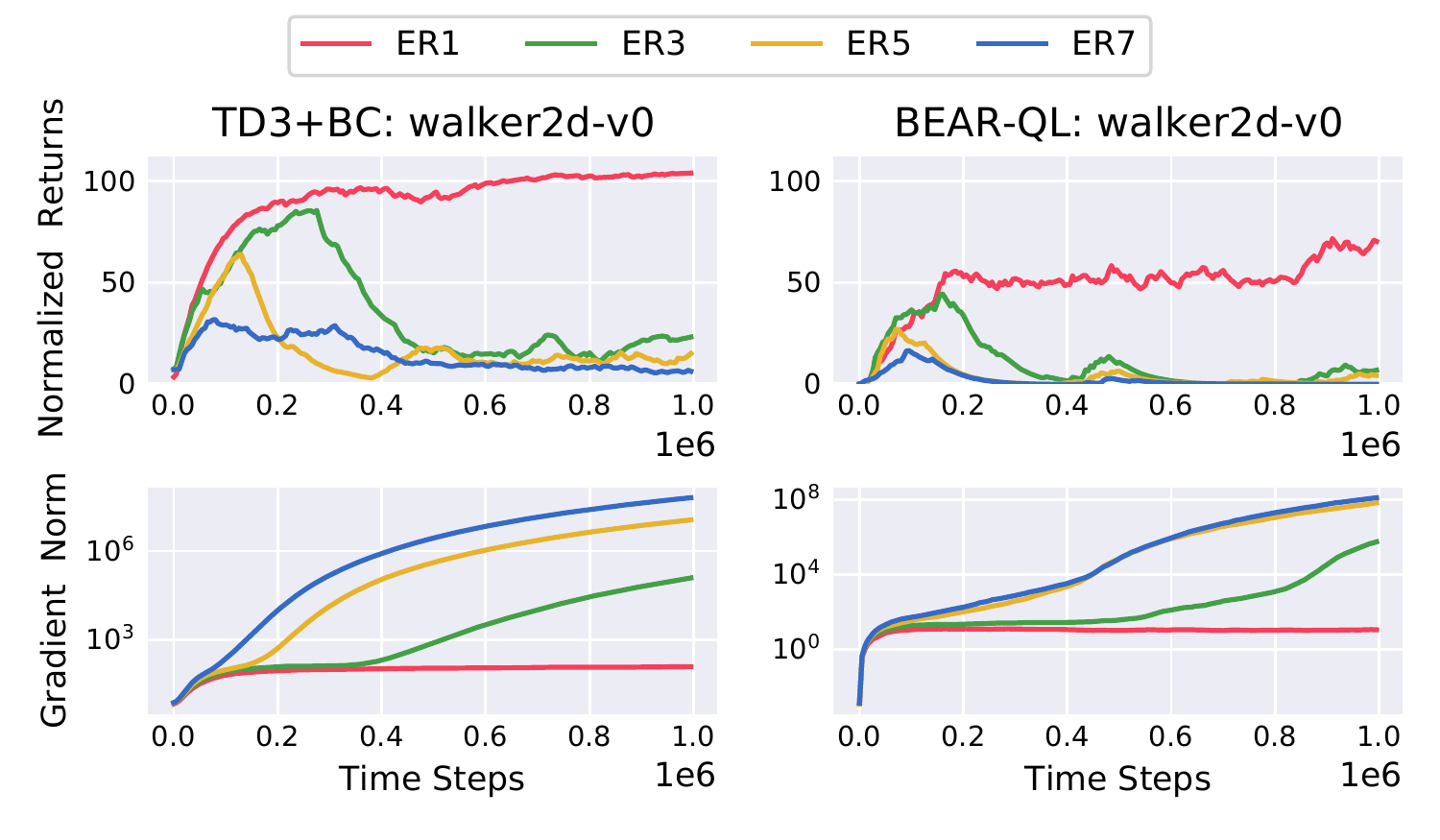}
    \vspace{-0.5cm}
    \caption{Catastrophic failure. \textbf{ER1} is short for expert-random-10, which denotes a contaminated dataset with 10\% random trajectories and 90\% expert demonstrations.}
    \label{fig:catastrophic_failue}
    \vspace{-0.5cm}
\end{wrapfigure}

\subsection{Catastrophic failures on contaminated datasets}
In order to mimic real-world logs that contain multi-level trajectories, we introduce contaminated datasets, which can be generated by contaminating an expert dataset with non-expert demonstrations. For instance, ER3, short for expert-random-30, refers to a dataset in which 70 percent are expert trajectories and 30 percent are from random behavior policies. Please refer to Appendix \ref{app:the_modified_dataset} for the detailed statistics and discussion about the contaminated datasets. 

We run TD3+BC and BEAR-QL on a contaminated dataset, walker2d-expert-random-v0, as depicted in Figure \ref{fig:catastrophic_failue}. The catastrophic failure occurs after the percentage of random data exceeds 30\%.

\subsection{Analysis with distribution-constrained Q-iteration}
Why do catastrophic failures occur, and always after the learned policy's performance has improved?
In order to give some insights, we use the analysis tools from~\cite{kumar2019stabilizing}, which involves a constrained Bellman backup operator, defined as:
\begin{equation}
     \mathcal{T}^\Pi Q(s_t,a_t) := \mathbb{E} \big[ r + \gamma \max_{\pi \in \Pi} \mathbb{E}_{T(s_{t+1} | s_t, a_t)} [V_{\pi}(s_{t+1})] \big], 
\end{equation}
with state value function $V_{\pi}(s_t) := \mathbb{E}_{\pi} [Q(s_t, \pi(a_t|s_t))]$ and a $\Pi$ to restrict the set of policies.


\vspace{0.1cm}
\begin{theorem}
\label{theo:constraintMDP}
The performance of distribution-constrained Q-iteration can be bounded as:
\begin{equation}
    \lim_{k \to \infty} \mathbb{E}_{d_0}\big[\Big|V^{\pi^k}(s_t) - V^\Pi(s_t)\Big|\big] \le 
    \frac{2\gamma}{(1-\gamma)^2} C_{\Pi, \mu} \mathbb{E_{\mu}}\Big[ \max_{\pi \in \Pi} \mathbb{E}_\pi [\delta(s_t,a_t)]\Big]
    \label{inequality}
\end{equation} 
with the concentrability coefficient $C_{\Pi, \mu}$ for quantifying how far the conditional distribution of the policy action $\pi(a_t |s_t) \sim \Pi$ is from the corresponding dataset action $\mu(a_t |s_t)$. $V^{\Pi}$ denotes the fixed point of $\mathcal{T}^\Pi$, $d_0$ denotes the initial state distribution.
\end{theorem}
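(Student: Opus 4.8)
The plan is to treat \eqref{inequality} as a standard error-propagation bound for approximate dynamic programming, in the spirit of Munos's $L_p$ performance bounds and the analysis of \cite{kumar2019stabilizing}. First I would check that $V^\Pi$ is well defined: the operator $\mathcal{T}^\Pi$ is a $\gamma$-contraction in the sup-norm (the inner $\max_{\pi\in\Pi}$ is a non-expansion, composed with the usual discounted Bellman structure), so it has a unique fixed point $Q^\Pi$ with associated value $V^\Pi$, and exact iteration of $\mathcal{T}^\Pi$ converges to it geometrically. This also fixes the role of $\delta(s_t,a_t)$ in the statement as the asymptotic per-step shortfall of the algorithm's update relative to the idealized backup: the practical scheme performs a constrained policy-improvement step yielding some $\pi^k$ and then the policy evaluation of \eqref{policy_evaluation}, $Q^{k+1}(s,a)=\mathbb{E}[r+\gamma Q^k(s',\pi^k(s'))]$, which differs from $\mathcal{T}^\Pi Q^k$ only because $\pi^k$ need not realize the $\max_{\pi\in\Pi}$ exactly; $\delta(s,a)\ge 0$ measures this gap, and the worst case over which policy in $\Pi$ is actually reached is what produces the $\max_{\pi\in\Pi}\mathbb{E}_\pi[\delta(s_t,a_t)]$ on the right-hand side.

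The core step is the recursion for the value gap. Writing $P^{\pi^k}$ for the state-transition kernel of $\pi^k$ and using monotonicity of $\mathcal{T}^\Pi$ and of the policy-evaluation operator, I would insert and cancel these operators to obtain a pointwise inequality of the shape $0\le V^\Pi - V^{\pi^{k+1}} \le \gamma\, P^{\pi^{k+1}}\!\left(V^\Pi - V^{\pi^k}\right) + (\text{terms linear in }\delta)$. Iterating this bound and letting $k\to\infty$, the initial-condition term $\gamma^k (P^{\pi})^k(V^\Pi-V^{\pi^0})$ vanishes since its sup-norm is $O(\gamma^k)$, leaving $\lim_k (V^\Pi - V^{\pi^k})$ dominated by a doubly indexed discounted sum of the errors transported through products of the $P^{\pi}$'s. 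The factor $2$ and one of the two powers of $(1-\gamma)^{-1}$ fall out here: the cleanest route is to split $V^\Pi - V^{\pi^k}$ through the iterate $Q^k$ — bounding it via the distance from $Q^\Pi$ to $Q^k$ and from $Q^k$ to the true value of $\pi^k$, each contributing its own geometric series — and $\gamma$ appears because the error is only felt after one transition.

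Finally I would change measure. The sum must be evaluated under the initial distribution $d_0$ pushed through the $P^{\pi}$ products, whereas $\delta$ is controlled only in expectation under $\mu$; the concentrability coefficient $C_{\Pi,\mu}$ is defined precisely so that the density of any $d_0$-rooted, $\Pi$-admissible $j$-step state distribution with respect to $\mu$ is controlled, uniformly enough (after folding in the $j$-dependence) that $\mathbb{E}_{d_0 (P^\pi)^j}[\delta] \le C_{\Pi,\mu}\,\mathbb{E}_\mu\!\big[\max_{\pi\in\Pi}\mathbb{E}_\pi[\delta(s_t,a_t)]\big]$; summing the residual geometric series in $\gamma$ supplies the second $(1-\gamma)^{-1}$ and yields exactly \eqref{inequality}.

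The step I expect to be the main obstacle is the recursion: getting the $\max_{\pi\in\Pi}$ inside the backup to pass through so that $\delta$ enters as $\max_{\pi\in\Pi}\mathbb{E}_\pi[\delta(s_t,a_t)]$ rather than a cruder $\sup$-over-actions quantity, and keeping careful track of the interplay of the two discounted sums so that the prefactor is $\frac{2\gamma}{(1-\gamma)^2}$ and not, say, $\frac{2\gamma}{(1-\gamma)^3}$. Once the recursion and the precise form of $C_{\Pi,\mu}$ are pinned down, the change-of-measure step is essentially a definition chase.
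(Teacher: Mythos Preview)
The paper does not actually prove Theorem~\ref{theo:constraintMDP}: it is stated without proof as an analysis tool borrowed from \cite{kumar2019stabilizing} (the BEAR-QL paper), and the only proof in the appendix is for Theorem~\ref{theo:lipschitzness}. So there is no ``paper's own proof'' to compare against here.

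That said, your sketch is the right one and matches how this bound is derived in \cite{kumar2019stabilizing}: the $\gamma$-contraction of $\mathcal{T}^\Pi$, the error-propagation recursion splitting $V^\Pi - V^{\pi^k}$ through the iterate, and the change of measure via the concentrability coefficient are exactly the three ingredients used there (which in turn follow Munos's $L_p$ analysis, as you note). Your identification of $\delta$ as the per-step suboptimality of the approximate backup relative to $\mathcal{T}^\Pi$ is also consistent with the source. The only caveat is that the precise constant $\tfrac{2\gamma}{(1-\gamma)^2}$ and the exact form $\mathbb{E}_\mu[\max_{\pi\in\Pi}\mathbb{E}_\pi[\delta]]$ depend on the specific definition of $C_{\Pi,\mu}$ adopted in \cite{kumar2019stabilizing}, which already absorbs a discounted sum over horizons; if you unroll the recursion with a ``raw'' density-ratio bound you will get an extra $(1-\gamma)^{-1}$, so when you write this up you should quote their definition of $C_{\Pi,\mu}$ verbatim rather than re-deriving it.
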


To understand why the catastrophic failure happens, we simplify the concentrability coefficient $C_{\Pi, \mu}$~\cite{munos2008finite} as the distance between the decisions from $\pi$ and $\mu$. 

Before the discussion, we first consider learning from a pure dataset generated by policies with similar decision-making capabilities. The policy improvement implicitly drives the learned policy out of the dataset distribution~\cite{hu2021actor}, resulting in a large concentrability coefficient~\cite{kumar2019stabilizing} for all dataset states. 
Policy constraint based offline RL algorithms force the learned policy $\pi \sim \Pi$ close to the behavior policy $\mu$, yielding a low concentrability coefficient and thus making it possible to learn RL policy from static datasets. 

For the contaminated dataset with two (or more) behavior policies with significantly different decision-making capacities, policy constrained offline RL faces a dilemma: 

\textit{Adhering to the non-expert dataset decisions (low $C_{\Pi, \mu}$) leads to bad policies, but driving out of the non-expert trajectories (large $C_{\Pi, \mu}$) faces OOD actions.}

\begin{wrapfigure}{r}{0.4\textwidth}
    \centering
    \vspace{-0.5cm}
    \includegraphics[width=\linewidth]{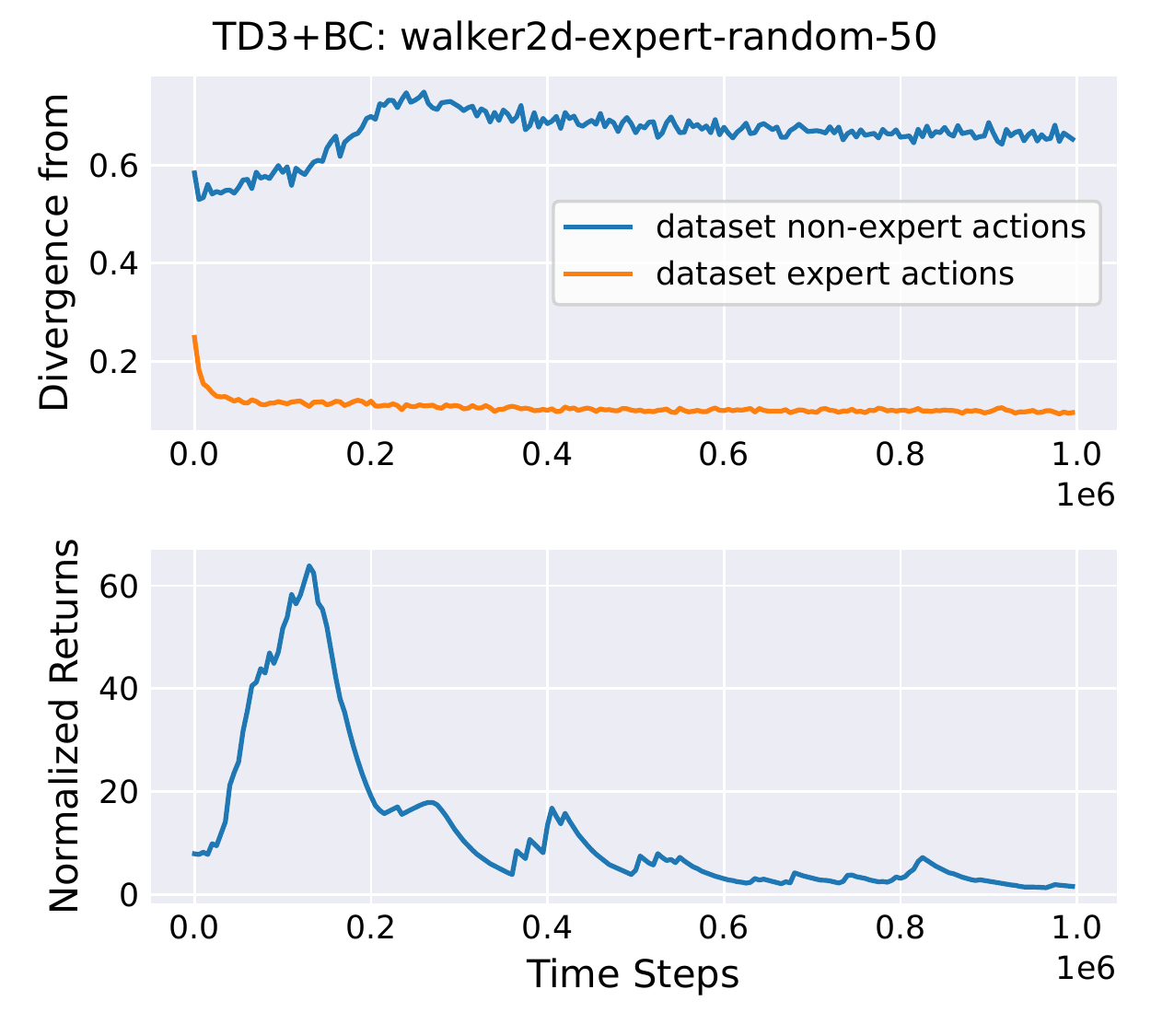}
    \vspace{-0.5cm}
    \caption{The failed closeness constraint on non-expert state-action pairs is correlated with catastrophic failures. \textbf{Divergence:} the 75th percentile of the squared error between the policy decisions and the corresponding dataset actions.}
    \label{fig:the_filed_closeness_constraint}
    \vspace{-0.5cm}
\end{wrapfigure}

In this case, behavior policies show different state visitions, as depicted by Figure~\ref{fig:state_overlap} and~\ref{fig:state_overlaps}. Policy improvements may implicitly drive the learned policy to be different from the non-expert dataset actions and thus leads to the failed closeness constraint on non-expert states, resulting in OOD actions and large $C_{\Pi, \mu}$. We visualize this process in Figure~\ref{fig:the_filed_closeness_constraint}. The catastrophic failure is correlated with the failed constraint towards non-expert decisions, i.e., the increasing divergence between the policy and non-expert dataset actions. Please note that throughout the learned policy stays close to dataset expert actions. 


The closeness constraint on non-expert state-action pairs is destroyed by the policy improvement, which is why it always occurs after achieving good performance (please see Figure \ref{fig:catastrophic_failue}).
The failed closeness constraint leads to OOD actions, which are recognized as the main challenge for offline RL that induces erroneous Q-values, overestimation problems, and bad policies. However, due to the dilemma mentioned above, finding a proper closeness metric for contaminated datasets is hard. 

How to save the policy when OOD actions are inevitable? To the best of our knowledge, this work is the first to observe that OOD actions are correlated with extreme sharpness of the Q-values (with respect to actions) and not just overly large values. This inspires us to alleviate the impact of OOD actions from the perspective of gradient regularity.

\section{Recovering from catastrophic failure via gradient penalty}
In this part, we introduce a gradient penalty to minimize the impact of OOD actions induced by the policy improvements on non-expert state-action pairs. Then we give proof to support the proposed gradient penalty. Finally, we discuss the difference between our method and a previous work. 

\subsection{Penalizing the unstable gradients}
Recall that the policy improvement step with neural network approximation is:
\begin{equation}
    \pi^{k+1}_{\theta} = \argmax_{\pi} Q(s_t, \pi(s_t))  
\end{equation}
In practice we run gradient ascent over the parameter space:
\begin{equation}
    \label{equation:gradient_affects}
    \theta = \theta + \alpha \cdot \nabla_{a_t} Q(s_t,a_t) \Big |_{a_t = \pi_{\theta}(s_t)} \cdot \nabla_{\theta} \pi_\theta(s_t)
\end{equation}

\begin{figure*}[t]
    \centering
    \includegraphics[width=\linewidth]{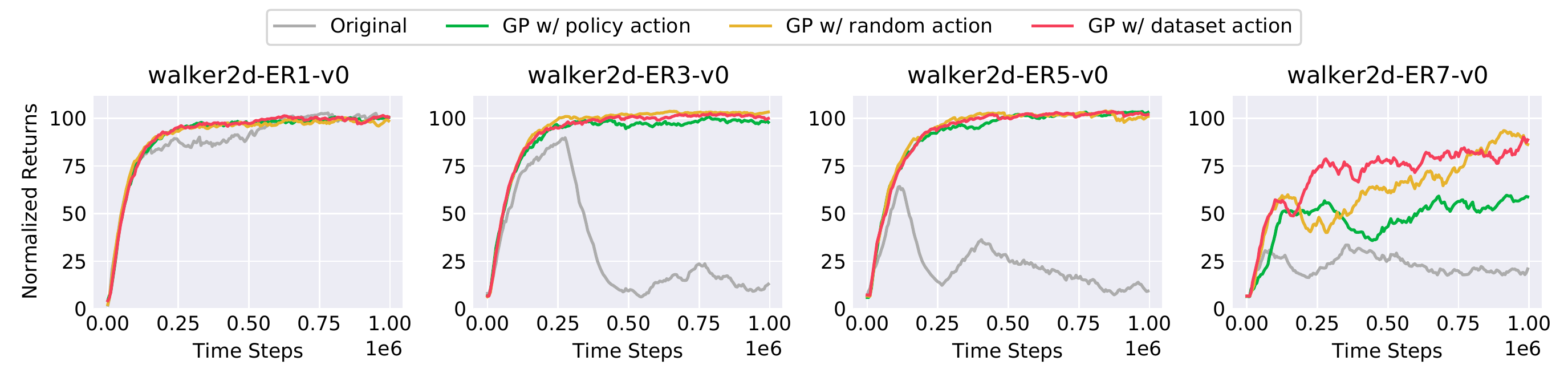}\vspace{-0.2cm}
    \caption{Performance of original TD3+BC algorithm (\textbf{Original}), and TD3+BC with gradient penalty w.r.t. actions from different sampling strategies (\textbf{Dataset action} for $a \sim \mu$, \textbf{Policy action} for $a \sim \pi$, and \textbf{Random action} for $a \sim \mathcal{A}$) on the contaminated D4RL datasets. \textbf{ER1} is short for expert-random-10. Best viewed in color. }
    \label{fig:samplingstragety_Qweights} \vspace{-0.5cm}
\end{figure*} 

We then recall the reasons for catastrophic failure. The improved policy generates OOD actions on non-expert states (with large $C_{\Pi, \mu}$), loosing the performance bound \ref{inequality} and resulting in unstable Q-values. The policy that derived from the misleadingly sharp Q-function gradients in turn produces unseen actions. 
To break the pathological loop, we propose our first modification for policy constrained offline RL methods, i.e., gradient penalty term in the critic loss:
\begin{equation}
\label{euqation:gradient_penalty}
    \mathcal{L}_{GP} = \lambda_{GP} \mathop{\mathbb{E}}\limits_{s_t\sim \mathcal{D}, \ a} \big[ \operatorname{ReLU}\big( \big\| \nabla_{a} Q(s_t,a) \big\|_F  - 1 \big) \big]^2
\end{equation}

We introduce a one-sided penalty to encourage the norm of the Q-function gradient w.r.t. non-expert action stays below $1$ while avoiding over-punishment for expert alike actions. $\lambda_{GP}$ controls the contribution of the gradient penalty term. In order to improve computational efficiency, we perform a gradient penalty in every $N$ training steps (we empirically set $N$ to 5 in our experiments).

Note that we do not specify the sampling distribution for action $a$, as we find there is no significant performance difference between the following three sampling strategies: 1) the current policy action $a \sim \pi$ , 2) the dataset action distribution $a \sim \mu$, and 3) random sampling over the action space $a \sim \mathcal{A}$. We will discuss the different motivations behind these choices later. 

\subsection{Lipschitz property of the learned Q-function}
To justify the proposed gradient penalty technique, we here prove that the Frobenius norm of the learned Q-function gradients w.r.t. input actions should be bounded.

\begin{theorem}
\label{theo:lipschitzness}
Suppose a policy $\pi(a_t | s_t)$ on an MDP $M= \langle \mathcal{S}, \mathcal{A}, r, \gamma, T \rangle$ satisfies the inequality $ \Big\| \frac{\partial{\pi(a_{t+1} | s_{t+1})}}{\partial{a_t}} \Big\|_F \leq L_{\pi,T} < 1$ and the reward function $r(s_t, a_t)$ satisfies $\Big\| \frac{\partial{r(s_t, a_t )}}{\partial{a_t}}  \Big \| \leq L_r$. If we denote the dimension of the action space as $N$, then the magnitude of the gradient of the learned Q-function w.r.t. action can be upperbounded as:
\begin{equation}
\label{euqation:upperbound}
    \Big\| \nabla_{a_t} Q^{\pi}(s_t,a_t) \Big\|_F \leq \frac{\sqrt{N} L_r}{1- \gamma L_{\pi, T}}
\end{equation}
\end{theorem}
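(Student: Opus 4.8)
The plan is to differentiate the policy-evaluation fixed-point equation for $Q^\pi$ with respect to the input action and convert it into a self-referential bound on $B:=\sup_{s,a}\|\nabla_a Q^\pi(s,a)\|_F$. Concretely, $Q^\pi$ is the fixed point of
\begin{equation}
Q^\pi(s_t,a_t) = r(s_t,a_t) + \gamma\, \mathbb{E}_{s_{t+1}\sim T(\cdot|s_t,a_t)}\big[\mathbb{E}_{a_{t+1}\sim\pi(\cdot|s_{t+1})}[Q^\pi(s_{t+1},a_{t+1})]\big],
\end{equation}
so I would first establish that $\nabla_{a_t}$ may be exchanged with the two expectations, then track, term by term, how $a_t$ propagates through the right-hand side.

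First I would handle the reward term: since $r$ is scalar, $\nabla_{a_t} r(s_t,a_t)$ is a vector in $\mathbb{R}^N$, and reading the hypothesis $\|\partial r/\partial a_t\|\le L_r$ as a bound on its coordinates gives $\|\nabla_{a_t} r(s_t,a_t)\|_F \le \sqrt{N}\,L_r$, which is where the $\sqrt{N}$ enters. For the discounted-future term, the only way $a_t$ influences it is along the chain $a_t \mapsto s_{t+1}\mapsto a_{t+1}$, whose net Jacobian is exactly $\partial \pi(a_{t+1}|s_{t+1})/\partial a_t$ (this is why the constant carries both $\pi$ and $T$ in its subscript), so the chain rule yields
\begin{equation}
\nabla_{a_t} Q^\pi(s_t,a_t) = \nabla_{a_t} r(s_t,a_t) + \gamma\, \mathbb{E}_{s_{t+1},a_{t+1}}\Big[\Big(\frac{\partial \pi(a_{t+1}|s_{t+1})}{\partial a_t}\Big)^{\!\top}\nabla_{a_{t+1}} Q^\pi(s_{t+1},a_{t+1})\Big].
\end{equation}
Taking Frobenius norms, using the triangle inequality, Jensen's inequality for the convex norm, submultiplicativity of the Frobenius norm, and the two Lipschitz hypotheses, and then taking the supremum over $(s_t,a_t)$, gives $B \le \sqrt{N}\,L_r + \gamma L_{\pi,T}\,B$.

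Finally, since $L_{\pi,T}<1$ and $\gamma\le 1$ we have $\gamma L_{\pi,T}<1$, so rearranging yields $B\le \sqrt{N}\,L_r/(1-\gamma L_{\pi,T})$, the claimed bound. To avoid presupposing $B<\infty$, I would instead run the same estimate along the value-iteration sequence $Q^{(k+1)}=\mathcal{T}^\pi Q^{(k)}$ started from a smooth $Q^{(0)}$: the gradient-norm bounds $B_k$ satisfy $B_{k+1}\le \sqrt{N}\,L_r+\gamma L_{\pi,T} B_k$, a contraction with unique fixed point $\sqrt{N}\,L_r/(1-\gamma L_{\pi,T})$, so $B_k$ converges there, and $Q^{(k)}\to Q^\pi$ (with gradients converging under mild regularity) transfers the bound to $Q^\pi$.

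The main obstacle is the differentiation step: rigorously justifying the interchange of $\nabla_{a_t}$ with the expectation over the stochastic transition, and---more subtly---handling the fact that $a_t$ also reaches $Q^\pi(s_{t+1},a_{t+1})$ through the \emph{state} slot $s_{t+1}$, a dependence the hypotheses do not separately bound. The clean resolution, and evidently the intended reading, is to treat $L_{\pi,T}$ as a composite Lipschitz constant of the map $a_t\mapsto a_{t+1}$ obtained by composing $T$ with $\pi$, so that this entire channel is absorbed into the single factor $\gamma L_{\pi,T}<1$; after that, everything reduces to routine norm bookkeeping and a geometric series.
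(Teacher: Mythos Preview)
Your proposal is correct and reaches the same bound, but the route differs from the paper's. The paper does not work at the Bellman fixed-point level; instead it unrolls $Q^\pi(s_t,a_t)=\sum_{k\ge 0}\gamma^k\,\mathbb{E}_{s_{t+k}\mid s_t}[r(s_{t+k},a_{t+k})]$, then argues \emph{coordinate-wise}: two auxiliary propositions show $|\nabla_{a_t^i}\mathbb{E}_{s_{t+k}\mid s_t}[r]|\le L_{\pi,T}\cdot \mathbb{E}_{s_{t+1}\mid s_t}|\nabla_{a_{t+1}^i}\mathbb{E}_{s_{t+k}\mid s_{t+1}}[r]|$ and, by iteration, $|\nabla_{a_t^i}\mathbb{E}_{s_{t+k}\mid s_t}[r]|\le L_{\pi,T}^k L_r$. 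Summing the geometric series in $k$ inside each coordinate and then summing squares over the $N$ coordinates produces the $\sqrt{N}L_r/(1-\gamma L_{\pi,T})$ factor. Your self-referential inequality $B\le \sqrt{N}L_r+\gamma L_{\pi,T}B$ compresses exactly the same bookkeeping into one step and avoids the per-coordinate unrolling; it is the cleaner argument, and your value-iteration variant to sidestep the a priori finiteness of $B$ is a nicety the paper does not include. Both arguments silently treat the chain $a_t\!\to\! s_{t+1}\!\to\! a_{t+1}$ as the sole channel and absorb it into the composite constant $L_{\pi,T}$, so the ``state-slot'' subtlety you flag is present in the paper's proof as well and is handled in the same way.
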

\begin{proof}See Appendix \ref{app:lipschitzness}. \end{proof}
\vspace{-0.3cm}

\remark Theorem \ref{theo:lipschitzness} holds for offline RL setting as the offline MDP is equal to an modified online MDP with a constrained Bellman backup operator~\cite{kumar2019stabilizing}. 
It tells that the Q-prediction should not vary much for a perturbation in input action, suggesting that the observed exploding Q-function gradients is unreasonable and thus motivates the gradient penalty.

\subsection{Difference with Fisher-BRC}
A keen reader may note that proposed gradient penalty looks similar to the Fisher divergence term in Fisher-BRC~\cite{kostrikov2021offline}:
\begin{equation}
    \mathop{\mathbb{E}}\limits_{s_t \sim \mathcal{D}} \Big[  \operatorname{Fisher}\big( \frac{\exp{Q(s_t, \cdot)}}{\sum_a \exp{Q(s_t,a)}} , \mu(\cdot| s_t) \big)  \Big] =  \nonumber 
    \mathop{\mathbb{E}}\limits_{s_t \sim \mathcal{D}, a \sim \pi_{emb}(\cdot | s_t)} \big[ \| \nabla_a Q(s_t,a) - \nabla_{a} \log \mu(a|s_t) \|_F^2  \big].
\end{equation}

However, they are different as 1) Fisher-BRC utilizes gradients to measure the Fisher information distance between the learned policy and the behavior policy $\mu$, while 2) our method serves to minimize the negative impact of OOD actions. 
With different motivations, our method 1) does not require an entropy regularizer for recovering the Boltzmann policy $\pi_{emb}$, and 2) should be insensitive to the action sampling distribution (while Fisher-BRC needs $a \sim \pi_{emb}$). Results in Figure \ref{fig:samplingstragety_Qweights} verify the second conjecture, in which three types of sampling strategies show no performance difference for expert-random-10 (ER1), ER3, and ER5 settings. We perform gradient penalty w.r.t random actions in the experiments section.

\begin{figure*}[t]
    \centering 
    \subfigure{\includegraphics[width=0.49\linewidth]{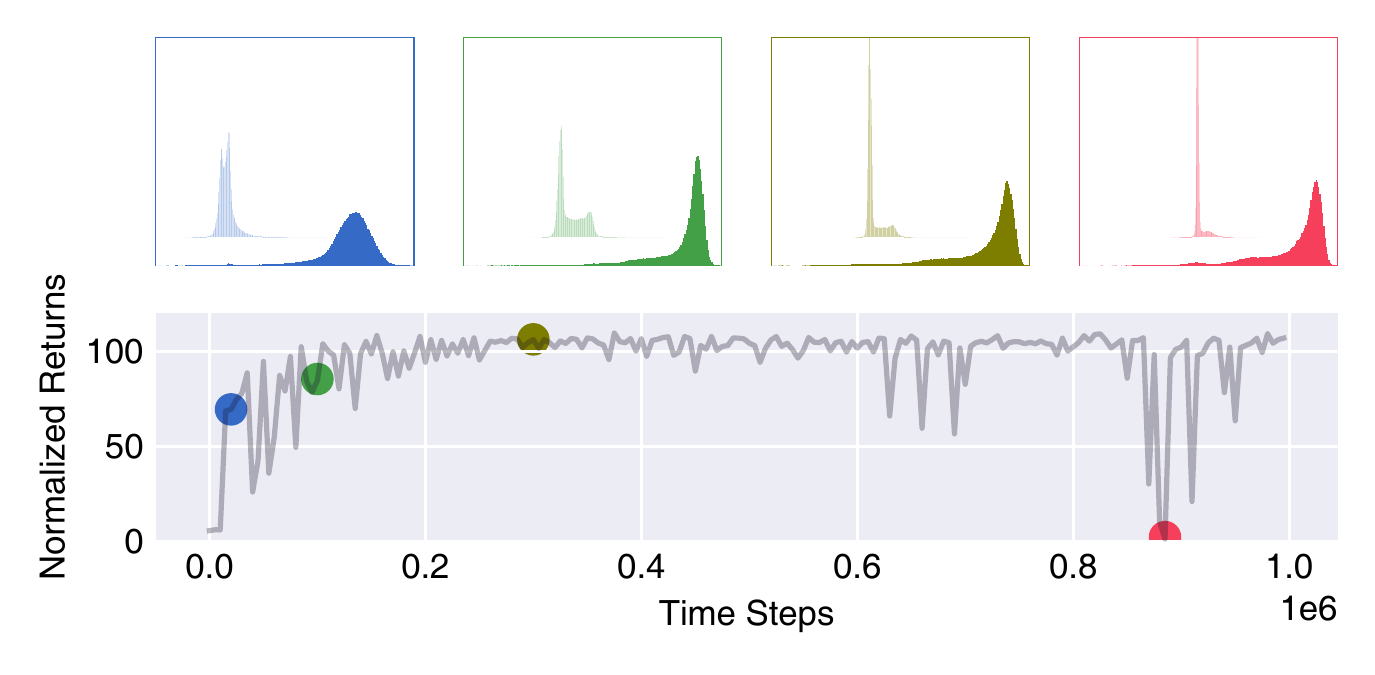}}
    \subfigure{\includegraphics[width=0.49\linewidth]{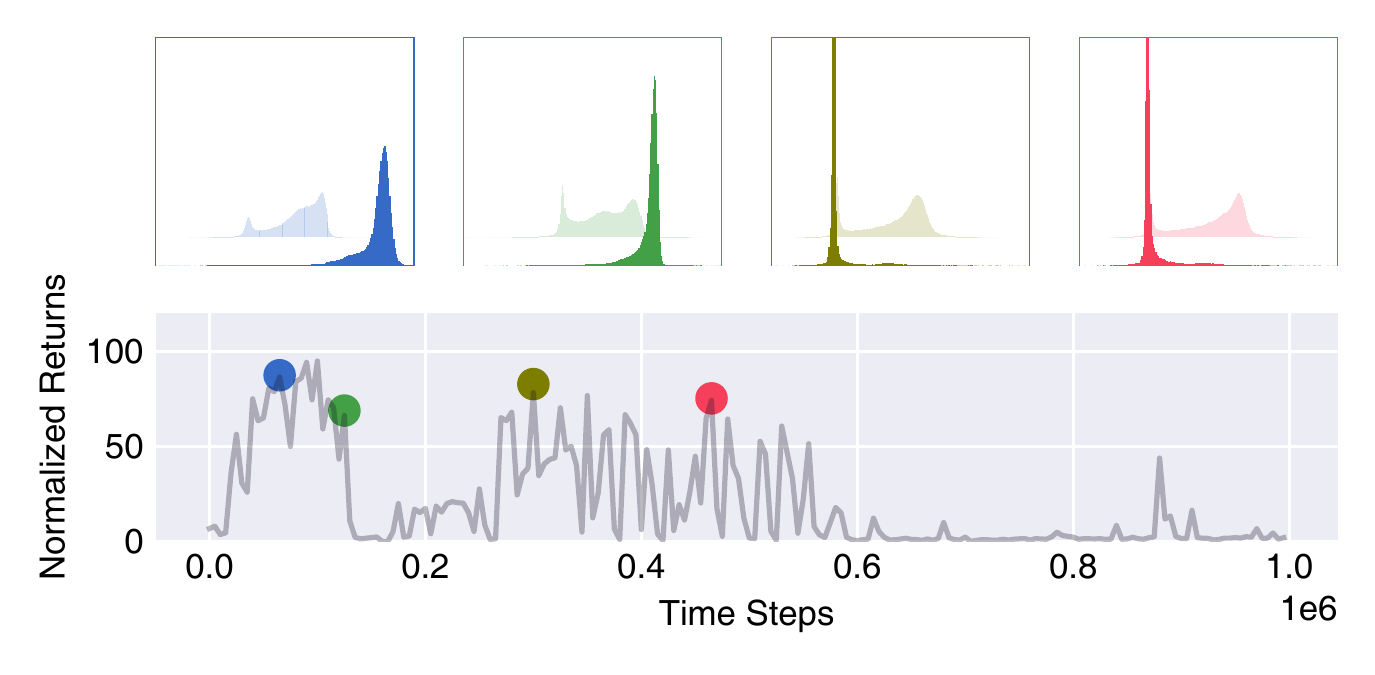}}\vspace{-0.2cm}
    \caption{Q-function with gradient penalty can distinguish expert and random actions. We plot the Q-value distributions of dataset actions (top) in the training process (bottom) of a TD3+BC agent \textbf{with (left)} and \textbf{without (right)} gradient penalty. Diluted and raised histogram for random actions, heavy color for expert actions. Task name is walker2d-expert-random-50-v0. Best viewed in color.\vspace{-0.1cm}}
    \label{fig:Q_value_distributions}
\end{figure*}

\section{Constraint relaxation with the polished Q-function}
The harmful closeness constraints toward non-expert dataset actions prevent the learned policy from optima, see also in ~\cite{wu2019imitation, sasaki2020behavioral}. We further relax the harmful closeness constraint by critic weighted constraint relaxation (+ CR) in this part.

The key challenge to relaxing the harmful constraints is to indicate the optimality of the dataset actions. In offline RL, the learned Q-function might serve this purpose. 
As depicted in Figure \ref{fig:Q_value_distributions}, the polished Q-function could successfully discriminate expert decisions (heavy colors) and random actions (diluted), even when the policy performs not so well (left). On the other hand, without GP, the Q-function is not accurate even if the performance is good (right).

We use the Q-value to indicate the optimality, with a min-max normalization over a mini-batch: 
\begin{equation}
    W(s_t, a_t) = \frac{Q(s_t,a_t) - Q_{min}}{Q_{max} - Q_{min}}  
\end{equation}
We then could rewrite the regularizer term in BEAR-QL as:
\begin{equation}
    \pi = \argmax_{\pi} \mathbb{E}_{s_t\sim \mathcal{D}} \Big[
    Q(s_t, \pi(s_t)) \Big] \ 
    \operatorname{ s.t. } \mathbb{E}_{s_t \sim \mathcal{D}} [\operatorname{MMD} \big(\beta(s_t), \pi(\cdot|s_t)\big) \cdot W\big(s_t, \beta(s_t)\big)] \leq \epsilon, 
\end{equation}
and for TD3+BC we have:
\begin{equation}
\begin{aligned}
    \pi = \argmax_{\pi} \mathop{\mathbb{E}}\limits_{(s_t,a_t)\sim \mathcal{D}} \Big[ 
     \frac{\alpha}{\mathbb{E}[|Q(s_t, a_t)|]} Q(s_t, \pi(s_t)) - (\pi(s_t) - a_t)^2 \cdot W(s_t,a_t)\Big]
\end{aligned}
\end{equation}
Note that we do not propagate gradient through the relaxation weight, $W(s_t,a_t)$.

\section{Experiments}
We proposed two modifications for policy constraint based offline RL: 1) gradient penalty (+ GP) to alleviate the negative impacts of OOD actions induced from the failed closeness constraint and 2) critic weighted constraint relaxation (+ CR) for the harmful closeness constraint, attaining TD3BC++ and BEAR++, on the top of TD3+BC and BEAR-QL.

\begin{table*}[!htp]
\centering
\vspace{-0.2cm}
\caption{Evaluation on the D4RL Mujoco Gym tasks. ER1 is short for Expert-random-10. We rerun all algorithms. With the proposed two techniques, BEAR++ and TD3BC++ could address the performance degradation and catastrophic failures issues. The highest performing scores are bolded.}\vspace{0.1cm}
\label{tab:performance_on_mujoco}
\begin{adjustbox}{width={\textwidth},totalheight={\textheight},keepaspectratio}%
\renewcommand{\arraystretch}{1.2}
\begin{tabular}{ccr@{\hspace{1pt}}lr@{\hspace{1pt}}lr@{\hspace{1pt}}lr@{\hspace{1pt}}lr@{\hspace{1pt}}lr@{\hspace{1pt}}lr@{\hspace{1pt}}lr@{\hspace{1pt}}lr@{\hspace{1pt}}l | r@{\hspace{1pt}}lr@{\hspace{1pt}}l}

\toprule
\textbf{Task}   & \textbf{Setting}  & \multicolumn{2}{c}{\textbf{BC}} & \multicolumn{2}{c}{\textbf{\%BC}} & \multicolumn{2}{c}{\textbf{CQL}} & \multicolumn{2}{c}{\textbf{BEAR-QL}} &  \multicolumn{2}{c}{\textbf{TD3+BC}} & \multicolumn{2}{c}{\textbf{\%TD3+BC}} & \multicolumn{2}{c}{\textbf{Fisher-BRC}} & \multicolumn{2}{c}{\textbf{UWAC}} & \multicolumn{2}{c}{\textbf{IQL}} & \multicolumn{2}{c}{\textbf{BEAR ++}}   & \multicolumn{2}{c}{\textbf{TD3BC ++}}\\
\midrule
\parbox[t]{1mm}{\multirow{6}{*}{\rotatebox[origin=c]{90}{Walker2d}}}
& Expert & 66.1 &$\pm$ {\color{mine}{22.7}}  & 60.9 &$\pm$ {\color{mine}{23.2}}  & 104.0 &$\pm$ {\color{mine}{6.7}}  & 75.1 &$\pm$ {\color{mine}{15.7}}  & \textbf{104.5} &$\pm$ {\color{mine}{5.0}}  & 101.3 &$\pm$ {\color{mine}{12.8}}   & 75.6 &$\pm$ {\color{mine}{42.0}} & 64.3 &$\pm$ {\color{mine}{23.9}} & 105.6 &$\pm${\color{mine}{ 3.7 }} & 97.2 &$\pm$ {\color{mine}{8.3}}  & 102.9 &$\pm$ {\color{mine}{4.3}} \\ 
& Expert-medium & 11.3 &$\pm$ {\color{mine}{8.0}}  & 7.4 &$\pm$ {\color{mine}{18.7}}  & 102.4 &$\pm$ {\color{mine}{13.0}}  & 56.1 &$\pm$ {\color{mine}{11.7}}  & 101.6 &$\pm$ {\color{mine}{10.4}}  & 11.6 &$\pm$ {\color{mine}{28.8}}  & 103.3 &$\pm$ {\color{mine}{5.3}}  & 14.8 &$\pm$ {\color{mine}{9.5}} & \textbf{105.1} &$\pm${\color{mine}{ 4.7 }} & 74.1 &$\pm$ {\color{mine}{9.0}}  & 104.3 &$\pm$ {\color{mine}{6.7}} \\ 
& ER1 & 7.1 &$\pm$ {\color{mine}{15.5}}  & 2.6 &$\pm$ {\color{mine}{11.8}}  & 100.8 &$\pm$ {\color{mine}{10.8}}  & 68.9 &$\pm$ {\color{mine}{13.5}}  & 98.8 &$\pm$ {\color{mine}{20.3}}  & 13.6 &$\pm$ {\color{mine}{26.8}}  & 100.0 &$\pm$ {\color{mine}{16.0}}  & 6.6 &$\pm$ {\color{mine}{14.4}} & \textbf{105.2} &$\pm${\color{mine}{ 3.6 }} & 94.5 &$\pm$ {\color{mine}{9.5}}  & 104.1 &$\pm$ {\color{mine}{5.2}} \\ 
& ER3 & 0.8 &$\pm$ {\color{mine}{0.1}}  & 0.9 &$\pm$ {\color{mine}{0.2}}  & 97.8 &$\pm$ {\color{mine}{13.4}}  & 2.2 &$\pm$ {\color{mine}{5.0}}  & 14.2 &$\pm$ {\color{mine}{21.7}}  & 3.5 &$\pm$ {\color{mine}{18.1}}  & 95.0 &$\pm$ {\color{mine}{25.8}} & 9.9 &$\pm$ {\color{mine}{19.9}}  & 102.9 &$\pm${\color{mine}{ 9.0 }} & 95.1 &$\pm$ {\color{mine}{8.1}}  & \textbf{104.3} &$\pm$ {\color{mine}{3.1}} \\ 
& ER5 & 1.0 &$\pm$ {\color{mine}{0.3}}  & 1.2 &$\pm$ {\color{mine}{0.4}}  & 93.2 &$\pm$ {\color{mine}{21.9}}  & 5.2 &$\pm$ {\color{mine}{5.5}}  & 8.6 &$\pm$ {\color{mine}{16.3}}  & 19.9 &$\pm$ {\color{mine}{23.9}}  & 82.5 &$\pm$ {\color{mine}{26.0}}  & 4.0 &$\pm$ {\color{mine}{10.4}} & 92.2 &$\pm${\color{mine}{ 12.6 }} & 87.0 &$\pm$ {\color{mine}{11.4}}  & \textbf{104.4} &$\pm$ {\color{mine}{5.2}} \\ 
& ER7 & 3.4 &$\pm$ {\color{mine}{8.0}}  & 1.9 &$\pm$ {\color{mine}{2.3}}  & 77.0 &$\pm$ {\color{mine}{28.3}}  & -0.2 &$\pm$ {\color{mine}{0.7}}  & 19.6 &$\pm$ {\color{mine}{23.0}}  & 9.6 &$\pm$ {\color{mine}{24.0}}  & 69.3 &$\pm$ {\color{mine}{33.7}} & 2.2 &$\pm$ {\color{mine}{4.0}}  & 67.6 &$\pm${\color{mine}{ 29.5 }} & 73.1 &$\pm$ {\color{mine}{12.4}}  & \textbf{100.2} &$\pm$ {\color{mine}{9.0}} \\ 

\midrule
\parbox[t]{1mm}{\multirow{6}{*}{\rotatebox[origin=c]{90}{Hopper}}}
& Expert & 111.7 &$\pm$ {\color{mine}{1.7}}  & 111.8 &$\pm$ {\color{mine}{1.7}}  & 111.7 &$\pm$ {\color{mine}{2.3}}  & 61.5 &$\pm$ {\color{mine}{54.3}}  & 112.2 &$\pm$ {\color{mine}{0.2}}  & \textbf{112.3} &$\pm$ {\color{mine}{0.3}}  & 112.2 &$\pm$ {\color{mine}{0.7}}  & 106.8 &$\pm$ {\color{mine}{10.8}} & \textbf{112.5} &$\pm${\color{mine}{ 0.2 }} & 111.4 &$\pm$ {\color{mine}{2.7}}  & 112.3 &$\pm$ {\color{mine}{0.2}} \\ 
& Expert-medium & 77.0 &$\pm$ {\color{mine}{38.6}}  & 1.7 &$\pm$ {\color{mine}{0.7}}  & \textbf{112.1} &$\pm$ {\color{mine}{0.3}}  & 85.1 &$\pm$ {\color{mine}{20.9}}  & 112.0 &$\pm$ {\color{mine}{0.4}}  & 1.4 &$\pm$ {\color{mine}{0.6}}  & 112.3 &$\pm$ {\color{mine}{0.3}} & 70.8 &$\pm$ {\color{mine}{33.3}}  & \textbf{112.5} &$\pm${\color{mine}{ 0.4 }} & 110.3 &$\pm$ {\color{mine}{3.8}}  & 112.1 &$\pm$ {\color{mine}{0.3}} \\ 
& ER1 & 106.6 &$\pm$ {\color{mine}{17.0}}  & 104.5 &$\pm$ {\color{mine}{20.1}}  & 112.1 &$\pm$ {\color{mine}{0.4}}  & 104.4 &$\pm$ {\color{mine}{12.8}}  & 112.2 &$\pm$ {\color{mine}{0.2}}  & 11.2 &$\pm$ {\color{mine}{4.9}}  & 112.3 &$\pm$ {\color{mine}{0.2}} & 91.5 &$\pm$ {\color{mine}{23.6}} & \textbf{112.6} &$\pm${\color{mine}{ 0.1 }} & 111.6 &$\pm$ {\color{mine}{3.6}}  & 112.3 &$\pm$ {\color{mine}{0.3}} \\ 
& ER3 & 25.8 &$\pm$ {\color{mine}{25.9}}  & 34.8 &$\pm$ {\color{mine}{30.8}}  & 111.2 &$\pm$ {\color{mine}{2.8}}  & 82.0 &$\pm$ {\color{mine}{13.8}}  & 112.1 &$\pm$ {\color{mine}{0.2}}  & 2.3 &$\pm$ {\color{mine}{1.5}}  & 112.1 &$\pm$ {\color{mine}{0.7}} & 9.9 &$\pm$ {\color{mine}{0.3}}  & \textbf{112.4} &$\pm${\color{mine}{ 0.2 }} & 104.8 &$\pm$ {\color{mine}{11.1}}  & 112.2 &$\pm$ {\color{mine}{0.2}} \\ 
& ER5 & 15.8 &$\pm$ {\color{mine}{20.8}}  & 10.6 &$\pm$ {\color{mine}{4.5}}  & 112.0 &$\pm$ {\color{mine}{1.8}}  & 27.1 &$\pm$ {\color{mine}{11.1}}  & \textbf{112.2} &$\pm$ {\color{mine}{0.2}}  & 15.2 &$\pm$ {\color{mine}{20.9}}  & 112.2 &$\pm$ {\color{mine}{0.2}} & 9.9 &$\pm$ {\color{mine}{0.2}} & 111.6 &$\pm${\color{mine}{ 2.4 }} & 92.0 &$\pm$ {\color{mine}{12.0}}  & \textbf{112.2} &$\pm$ {\color{mine}{0.3}} \\ 
& ER7 & 9.6 &$\pm$ {\color{mine}{0.2}}  & 10.0 &$\pm$ {\color{mine}{2.4}}  & 17.9 &$\pm$ {\color{mine}{21.0}}  & 10.0 &$\pm$ {\color{mine}{0.1}}  & 112.0 &$\pm$ {\color{mine}{0.7}}  & 0.6 &$\pm$ {\color{mine}{0.0}}  & 112.1 &$\pm$ {\color{mine}{0.8}}  & 9.7 &$\pm$ {\color{mine}{0.2}} & \textbf{112.5} &$\pm${\color{mine}{ 0.1 }} & 45.8 &$\pm$ {\color{mine}{40.4}}  & 112.1 &$\pm$ {\color{mine}{0.2}} \\

\midrule
\parbox[t]{1mm}{\multirow{6}{*}{\rotatebox[origin=c]{90}{Halfcheetah}}}
& Expert & 105.8 &$\pm$ {\color{mine}{2.4}}  & 105.5 &$\pm$ {\color{mine}{2.6}}  & 94.7 &$\pm$ {\color{mine}{7.3}}  & 103.8 &$\pm$ {\color{mine}{6.0}}  & 105.3 &$\pm$ {\color{mine}{4.3}}  & 105.6 &$\pm$ {\color{mine}{2.7}}  & 106.5 &$\pm$ {\color{mine}{3.5}} & 95.1 &$\pm$ {\color{mine}{10.2}}  & 102.4 &$\pm${\color{mine}{ 3.8 }} & 104.5 &$\pm$ {\color{mine}{3.4}}  & \textbf{105.9} &$\pm$ {\color{mine}{3.4}} \\ 
& Expert-medium & 65.9 &$\pm$ {\color{mine}{19.0}}  & 74.6 &$\pm$ {\color{mine}{30.0}}  & 33.3 &$\pm$ {\color{mine}{10.9}}  & 49.3 &$\pm$ {\color{mine}{9.5}}  & 94.9 &$\pm$ {\color{mine}{6.3}}  & 1.3 &$\pm$ {\color{mine}{1.5}}  & 95.3 &$\pm$ {\color{mine}{9.9}} & 38.0 &$\pm$ {\color{mine}{4.4}} & 81.9 &$\pm${\color{mine}{ 7.3 }} & 91.0 &$\pm$ {\color{mine}{9.3}}  & \textbf{105.3} &$\pm$ {\color{mine}{2.3}} \\ 
& ER1 & 89.6 &$\pm$ {\color{mine}{11.1}}  & 95.3 &$\pm$ {\color{mine}{8.3}}  & 83.0 &$\pm$ {\color{mine}{11.4}}  & 93.9 &$\pm$ {\color{mine}{17.5}}  & 101.5 &$\pm$ {\color{mine}{5.2}}  & 40.7 &$\pm$ {\color{mine}{26.1}}  & 93.3 &$\pm$ {\color{mine}{11.3}} & 63.5 &$\pm$ {\color{mine}{19.3}} & 76.1 &$\pm${\color{mine}{ 9.1 }} & 100.4 &$\pm$ {\color{mine}{7.6}}  & \textbf{105.1} &$\pm$ {\color{mine}{3.9}} \\ 
& ER3 & 66.1 &$\pm$ {\color{mine}{17.6}}  & 58.8 &$\pm$ {\color{mine}{19.8}}  & 62.0 &$\pm$ {\color{mine}{14.4}}  & 82.2 &$\pm$ {\color{mine}{19.4}}  & 98.4 &$\pm$ {\color{mine}{7.2}}  & 37.6 &$\pm$ {\color{mine}{31.8}}  & 67.8 &$\pm$ {\color{mine}{21.0}} & 22.6 &$\pm$ {\color{mine}{17.4}} & 64.2 &$\pm${\color{mine}{ 12.6 }} & 103.0 &$\pm$ {\color{mine}{5.3}}  & \textbf{103.8} &$\pm$ {\color{mine}{4.2}} \\ 
& ER5 & 30.1 &$\pm$ {\color{mine}{15.5}}  & 19.2 &$\pm$ {\color{mine}{12.4}}  & 55.8 &$\pm$ {\color{mine}{11.5}}  & 43.4 &$\pm$ {\color{mine}{20.8}}  & 90.1 &$\pm$ {\color{mine}{9.7}}  & 43.3 &$\pm$ {\color{mine}{30.3}}  & 46.9 &$\pm$ {\color{mine}{17.7}} & 2.3 &$\pm$ {\color{mine}{0.1}} & 53.0 &$\pm${\color{mine}{ 10.1 }} & 100.3 &$\pm$ {\color{mine}{8.3}}  & \textbf{105.2} &$\pm$ {\color{mine}{2.2}} \\ 
& ER7 & 2.5 &$\pm$ {\color{mine}{1.5}}  & 2.4 &$\pm$ {\color{mine}{0.3}}  & 40.2 &$\pm$ {\color{mine}{13.0}}  & 2.3 &$\pm$ {\color{mine}{0.0}}  & 67.6 &$\pm$ {\color{mine}{9.6}}  & 30.9 &$\pm$ {\color{mine}{42.5}}  & 29.0 &$\pm$ {\color{mine}{12.5}} & 2.3 &$\pm$ {\color{mine}{0.0}} & 31.0 &$\pm${\color{mine}{ 10.9 }} & \textbf{101.8} &$\pm$ {\color{mine}{4.6}}  & 99.8 &$\pm$ {\color{mine}{4.7}} \\ 

\midrule
Total & & 796.2 &$\pm$ {\color{mine}{225.9}}  & 704.1 &$\pm$ {\color{mine}{190.1}}  & 1521.2 &$\pm$ {\color{mine}{191.3}}  & 952.1 &$\pm$ {\color{mine}{238.4}}  & 1577.8 &$\pm$ {\color{mine}{140.9}}  & 561.8 &$\pm$ {\color{mine}{297.8}}  & 1637.9 &$\pm$ {\color{mine}{227.3}} & 624.1 &$\pm$ {\color{mine}{201.7}} & 1663.3 & $\pm$ {\color{mine}{124.1}} & 1697.9 &\ (+78.3\%)  & 1918.5 &\  (+21.6\%) \\ 

\bottomrule
\end{tabular}
\end{adjustbox}
\end{table*}
\begin{table*}[!htp]
\centering
\vspace{-0.2cm}
\caption{Evaluation on the D4RL Adroit domain, involves controlling a 24-DoF robotic hand to perform different tasks. EC1 is short for Expert-cloned-10, with cloned trajectories for non-expert behaviors. The highest performing scores are bolded.}\vspace{0.1cm}
\label{tab:performance_on_adriot}
\begin{adjustbox}{width={\textwidth},totalheight={\textheight},keepaspectratio}%
\renewcommand{\arraystretch}{1.2}
\begin{tabular}{ccr@{\hspace{1pt}}lr@{\hspace{1pt}}lr@{\hspace{1pt}}lr@{\hspace{1pt}}lr@{\hspace{1pt}}lr@{\hspace{1pt}}lr@{\hspace{1pt}}lr@{\hspace{1pt}}lr@{\hspace{1pt}}l | r@{\hspace{1pt}}lr@{\hspace{1pt}}l}
\toprule
\textbf{Task}   & \textbf{Setting}  & \multicolumn{2}{c}{\textbf{BC}} & \multicolumn{2}{c}{\textbf{\%BC}} & \multicolumn{2}{c}{\textbf{CQL}} & \multicolumn{2}{c}{\textbf{BEAR-QL}} &  \multicolumn{2}{c}{\textbf{TD3+BC}} & \multicolumn{2}{c}{\textbf{\%TD3+BC}} & \multicolumn{2}{c}{\textbf{Fisher-BRC}} & \multicolumn{2}{c}{\textbf{UWAC}} & \multicolumn{2}{c}{\textbf{IQL}} & \multicolumn{2}{c}{\textbf{BEAR ++}}   & \multicolumn{2}{c}{\textbf{TD3BC ++}}\\
\midrule

\parbox[t]{1mm}{\multirow{4}{*}{\rotatebox[origin=c]{90}{Door}}}
& Expert & 104.6 &$\pm$ {\color{mine}{1.1}}  & 104.6 &$\pm$ {\color{mine}{1.3}}  & 102.9 &$\pm$ {\color{mine}{5.0}}  & 104.8 &$\pm$ {\color{mine}{0.5}}  & 103.7 &$\pm$ {\color{mine}{3.5}}  & 104.4 &$\pm$ {\color{mine}{3.7}} & 49.4 &$\pm$ {\color{mine}{23.5}} & 104.5 &$\pm$ {\color{mine}{1.2}} & \textbf{105.6} &$\pm${\color{mine}{ 1.4 }} & 104.8 &$\pm$ {\color{mine}{0.7}}  & 105.1 &$\pm$ {\color{mine}{0.3}} \\ 

& EC3 & 102.3 &$\pm$ {\color{mine}{14.7}}  & 103.0 &$\pm$ {\color{mine}{8.9}}  & 101.9 &$\pm$ {\color{mine}{3.1}}  & 104.4 &$\pm$ {\color{mine}{1.0}}  & 0.0 &$\pm$ {\color{mine}{0.0}}  & 35.3 &$\pm$ {\color{mine}{50.0}} & -0.0 &$\pm$ {\color{mine}{0.1}} & 104.0 &$\pm$ {\color{mine}{1.4}} & 104.3 &$\pm${\color{mine}{ 2.5 }} & 104.5 &$\pm$ {\color{mine}{0.8}}  & \textbf{105.2} &$\pm$ {\color{mine}{0.6}} \\

& EC5 & 103.5 &$\pm$ {\color{mine}{1.9}}  & 86.1 &$\pm$ {\color{mine}{32.3}}  & -0.2 &$\pm$ {\color{mine}{0.0}}  & 82.4 &$\pm$ {\color{mine}{20.9}}  & -0.1 &$\pm$ {\color{mine}{0.0}}  & 0.1 &$\pm$ {\color{mine}{0.5}} & -0.0 &$\pm$ {\color{mine}{0.1}} & 101.9 &$\pm$ {\color{mine}{3.0}} & 104.2 &$\pm${\color{mine}{ 2.8 }} & \textbf{104.6} &$\pm$ {\color{mine}{0.8}}  & 104.4 &$\pm$ {\color{mine}{1.8}} \\

& EC7 & 52.2 &$\pm$ {\color{mine}{39.2}}  & 43.1 &$\pm$ {\color{mine}{40.0}}  & -0.2 &$\pm$ {\color{mine}{0.1}}  & -0.2 &$\pm$ {\color{mine}{0.1}}  & 0.0 &$\pm$ {\color{mine}{0.0}}  & -0.2 &$\pm$ {\color{mine}{0.0}}  & -0.0 &$\pm$ {\color{mine}{0.1}} & 92.3 &$\pm$ {\color{mine}{9.9}} & 104.3 &$\pm${\color{mine}{ 2.5 }} & 103.0 &$\pm$ {\color{mine}{1.4}}  & \textbf{104.5} &$\pm$ {\color{mine}{1.4}} \\

\midrule
\parbox[t]{1mm}{\multirow{4}{*}{\rotatebox[origin=c]{90}{Hammer}}}
& Expert & 126.6 &$\pm$ {\color{mine}{0.5}}  & 126.5 &$\pm$ {\color{mine}{0.5}}  & \multicolumn{2}{c}{-}  & 126.9 &$\pm$ {\color{mine}{0.3}}  & 127.8 &$\pm$ {\color{mine}{0.6}}  & \textbf{128.0} &$\pm$ {\color{mine}{0.4}} & 35.9 &$\pm$ {\color{mine}{33.9}} & 126.2 &$\pm$ {\color{mine}{0.6}} & 119.7 &$\pm${\color{mine}{ 12.5 }} & 126.9 &$\pm$ {\color{mine}{0.5}}  & 126.8 &$\pm$ {\color{mine}{0.5}} \\

& EC3 & 126.9 &$\pm$ {\color{mine}{0.7}}  & 127.1 &$\pm$ {\color{mine}{0.4}}  & \multicolumn{2}{c}{-}  & 84.7 &$\pm$ {\color{mine}{59.7}}  & \textbf{128.0} &$\pm$ {\color{mine}{0.4}}  & 34.5 &$\pm$ {\color{mine}{41.2}} & 0.2 &$\pm$ {\color{mine}{0.1}}& 126.6 &$\pm$ {\color{mine}{0.6}}  & 124.9 &$\pm${\color{mine}{ 5.8 }} & 126.7 &$\pm$ {\color{mine}{0.6}}  & 126.9 &$\pm$ {\color{mine}{0.5}} \\

& EC5 & 120.4 &$\pm$ {\color{mine}{18.2}}  & 115.5 &$\pm$ {\color{mine}{25.9}}  & \multicolumn{2}{c}{-}  & 90 &$\pm$ {\color{mine}{42.0}}  & \textbf{128.4} &$\pm$ {\color{mine}{0.7}}  & 51.1 &$\pm$ {\color{mine}{46.5}} & 0.2 &$\pm$ {\color{mine}{0.0}} & 125.4 &$\pm$ {\color{mine}{4.1}} & 126.8 &$\pm${\color{mine}{ 2.4 }} & 127.0 &$\pm$ {\color{mine}{0.4}}  & 127.1 &$\pm$ {\color{mine}{0.5}} \\

& EC7 & 73.7 &$\pm$ {\color{mine}{28.0}}  & 62.0 &$\pm$ {\color{mine}{28.1}}  & \multicolumn{2}{c}{-}  & 21.0 &$\pm$ {\color{mine}{46.5}}  & 0.8 &$\pm$ {\color{mine}{0.6}}  & 0.1 &$\pm$ {\color{mine}{0.1}} & 0.3 &$\pm$ {\color{mine}{0.2}} & 107.9 &$\pm$ {\color{mine}{19.5}} & 127.6 &$\pm${\color{mine}{ 0.6 }} & 127.0 &$\pm$ {\color{mine}{0.7}}  & \textbf{127.9} &$\pm$ {\color{mine}{1.9}} \\

\midrule
\parbox[t]{1mm}{\multirow{4}{*}{\rotatebox[origin=c]{90}{Pen}}}
& Expert & 157.5 &$\pm$ {\color{mine}{5.4}}  & \textbf{157.8} &$\pm$ {\color{mine}{5.4}}  & 94.7 &$\pm$ {\color{mine}{25.8}}  & 155.5 &$\pm$ {\color{mine}{2.0}}  & 132.5 &$\pm$ {\color{mine}{26.3}}  & 121.3 &$\pm$ {\color{mine}{34.5}}& \multicolumn{2}{c}{-} & 155.1 &$\pm$ {\color{mine}{2.6}} & 155.8 &$\pm${\color{mine}{ 5.4 }} & 155.0 &$\pm$ {\color{mine}{2.3}}  & 150.3 &$\pm$ {\color{mine}{9.1}} \\

& EC3 & 145.8 &$\pm$ {\color{mine}{24.4}}  & 148.1 &$\pm$ {\color{mine}{20.5}}  & 66.1 &$\pm$ {\color{mine}{50.0}}  & -3.7 &$\pm$ {\color{mine}{0.4}}  & 100.4 &$\pm$ {\color{mine}{10.3}}  & 100.2 &$\pm$ {\color{mine}{26.3}} & \multicolumn{2}{c}{-} & 154.5 &$\pm$ {\color{mine}{2.4}} & \textbf{156.1} &$\pm${\color{mine}{ 5.1 }} & 154.3 &$\pm$ {\color{mine}{1.8}}  & 128.9 &$\pm$ {\color{mine}{42.3}} \\

& EC5 & 67.9 &$\pm$ {\color{mine}{38.1}}  & 89.2 &$\pm$ {\color{mine}{31.7}}  & -1.6 &$\pm$ {\color{mine}{2.1}}  & -2.6 &$\pm$ {\color{mine}{0.2}}  & 67.1 &$\pm$ {\color{mine}{37.1}}  & 96.2 &$\pm$ {\color{mine}{20.5}} & \multicolumn{2}{c}{-} & 152.6 &$\pm$ {\color{mine}{2.2}} & 154.3 &$\pm${\color{mine}{ 6.1 }} & \textbf{153.9} &$\pm$ {\color{mine}{2.8}}  & 141.6 &$\pm$ {\color{mine}{17.8}} \\

& EC7 & 61.8 &$\pm$ {\color{mine}{33.7}}  & 59.2 &$\pm$ {\color{mine}{22.3}}  & -1.6 &$\pm$ {\color{mine}{2.4}}  & -2.4 &$\pm$ {\color{mine}{0.1}}  & 65.5 &$\pm$ {\color{mine}{25.5}}  & -1.4 &$\pm$ {\color{mine}{1.5}} & \multicolumn{2}{c}{-} & 59.1 &$\pm$ {\color{mine}{15.6}} & \textbf{154.6} &$\pm${\color{mine}{ 6.3 }} & 63.8 &$\pm$ {\color{mine}{15.5}}  & 101.5 &$\pm$ {\color{mine}{21.2}} \\

\midrule
\parbox[t]{1mm}{\multirow{4}{*}{\rotatebox[origin=c]{90}{Relocate}}}
& Expert & 102.3 &$\pm$ {\color{mine}{3.6}}  & 102.9 &$\pm$ {\color{mine}{3.3}}  & \multicolumn{2}{c}{-}  & \textbf{105.2} &$\pm$ {\color{mine}{1.5}}  & \textbf{105.2} &$\pm$ {\color{mine}{2.3}}  & \textbf{105.2} &$\pm$ {\color{mine}{2.3}} & 3.9 &$\pm$ {\color{mine}{6.1}} & 105.1 &$\pm$ {\color{mine}{2.8}} & 104.9 &$\pm${\color{mine}{ 4.4 }} & \textbf{105.2} &$\pm$ {\color{mine}{2.5}}  & 103.5 &$\pm$ {\color{mine}{4.1}} \\
& EC3 & 103.2 &$\pm$ {\color{mine}{3.8}}  & 100.9 &$\pm$ {\color{mine}{5.0}}  & \multicolumn{2}{c}{-}  & -0.3 &$\pm$ {\color{mine}{0.0}}  & 103.9 &$\pm$ {\color{mine}{3.3}}  & 71.5 &$\pm$ {\color{mine}{37.9}} & -0.0 &$\pm$ {\color{mine}{0.1}} & 104.1 &$\pm$ {\color{mine}{3.7}} & \textbf{107.1} &$\pm${\color{mine}{ 2.7 }} & 105.9 &$\pm$ {\color{mine}{1.4}}  & 104.4 &$\pm$ {\color{mine}{2.5}} \\
& EC5 & 82.1 &$\pm$ {\color{mine}{23.5}}  & 89.0 &$\pm$ {\color{mine}{18.5}}  & \multicolumn{2}{c}{-}  & -0.3 &$\pm$ {\color{mine}{0.0}}  & 97.5 &$\pm$ {\color{mine}{9.0}}  & 80.4 &$\pm$ {\color{mine}{15.3}} & -0.0 &$\pm$ {\color{mine}{0.2}} & 103.2 &$\pm$ {\color{mine}{3.5}} & \textbf{106.2} &$\pm${\color{mine}{ 3.7 }} & 105.4 &$\pm$ {\color{mine}{1.7}}  & 103.4 &$\pm$ {\color{mine}{2.4}} \\
& EC7 & 40.1 &$\pm$ {\color{mine}{27.4}}  & 35.4 &$\pm$ {\color{mine}{29.4}}  & \multicolumn{2}{c}{-}  & -0.3 &$\pm$ {\color{mine}{0.0}}  & 27.4 &$\pm$ {\color{mine}{34.0}}  & 0.0 &$\pm$ {\color{mine}{0.1}} & -0.0 &$\pm$ {\color{mine}{0.1}} & 74.9 &$\pm$ {\color{mine}{9.7}} & \textbf{107.0} &$\pm${\color{mine}{ 3.1 }} & 102.4 &$\pm$ {\color{mine}{3.0}}  & 99.2 &$\pm$ {\color{mine}{6.7}} \\
\midrule
Total & & 1571.1 &$\pm$ {\color{mine}{264.2}}  & 1550.4 &$\pm$ {\color{mine}{273.5}}  & \multicolumn{2}{c}{-} & 865.0 &$\pm$ {\color{mine}{175.2}}  & 1188.1 &$\pm$ {\color{mine}{153.7}}  & 926.8 &$\pm$ {\color{mine}{281.0}} & \multicolumn{2}{c}{-} & 1797.5 & $\pm$ {\color{mine}{82.9}}  & 1963.1 & $\pm${\color{mine}{68.7}} & 1870.3 & \ (+116.2\%)  & 1860.8 & \ (+56.6\%) \\
\bottomrule
\end{tabular}
\end{adjustbox}
\end{table*}

\subsection{Setup}

\vspace{-0.2cm}
\paragraph{Datasets.}
We consider three types of contaminated datasets, expert-medium, expert-cloned and expert-random (see Appendix \ref{app:the_modified_dataset}). For performance on original tasks, please refer to Appendix \ref{app:evaluation_on_mujoco}.

\vspace{-0.2cm}
\paragraph{Evaluation.}
We train each algorithm for 1 million training time steps, evaluate them every 5000 time steps and finally report the mean and standard deviation of the normalized score \cite{fu2020d4rl} over the final 500 episodes (10 trajectories, 10 evaluations and 5 seeds). Please note that 5-seed evaluation is a common setting for offline RL evaluation~\cite{kumar2019stabilizing, wu2019behavior, fujimoto2021minimalist, ma2021conservative, wu2021uncertainty, sinha2022s4rl}. 

\vspace{-0.2cm}
\paragraph{Baselines.}  We compare TD3BC++ and BEAR++ with BC \cite{pomerleau1991efficient}, CQL \cite{kumar2020conservative}, IQL \cite{kostrikov2021offlineiql}, UWAC~\cite{wu2021uncertainty}, Fihser-BRC\cite{kostrikov2021offline} and original BEAR-QL \cite{kumar2019stabilizing}, TD3+BC \cite{fujimoto2021minimalist}. We also examine percentile BC and percentile TD3+BC, i.e., run BC and TD3+BC on the top $X\%$ of transitions with higher immediate rewards. We set $X\%$ to the percentage of expert transitions in each dataset.

\subsection{Results and discussion}

\vspace{-0.2cm}
\paragraph{Performance degradation (Table \ref{tab:performance_on_mujoco}).}
The proposed TD3BC++ and BEAR++ show resistance to performance degradation, i.e., agents trained on expert-medium datasets perform as well as on expert datasets, for all 3 mujoco gym tasks.

\vspace{-0.2cm}
\paragraph{Catastrophic failure (Table \ref{tab:performance_on_mujoco} and \ref{tab:performance_on_adriot}).}
BEAR-QL and TD3+BC suffer from catastrophic failure when learning on datasets contain low-level trajectories, e.g., expert-random and expert-cloned datasets. Fortunately, the proposed methods alleviate the catastrophic failure issues for all 7 tasks. And they could even help TD3+BC perform as well on ER7 as on the expert datasets, in 6 of 7 tasks.

\vspace{-0.2cm}
\paragraph{Further penalization on OOD actions (Figure \ref{fig:tech_gp_save_the_failed_colseness_constraint}).}
Recall that the failed closeness constraint on non-expert decisions produces OOD policy actions that differ from the dataset actions. And the proposed gradient penalty successfully recovers the Q-function by penalizing the unstable sharp Q gradients. In light of this, we suspect that GP could also contribute to reducing the strength of the required closeness constraint for policy constraint based offline RL. 

\begin{figure*}[!ht]
    \centering
    \includegraphics[width=\linewidth]{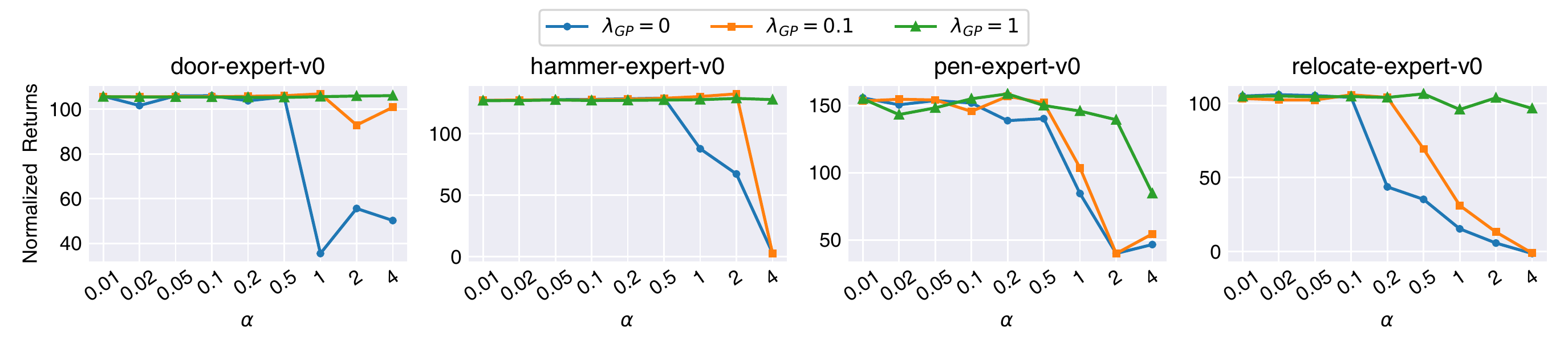}\vspace{-0.4cm}
    \caption{Gradient penalty alleviates the dependence on policy constraints. We run TD3+BC plus different strengths of gradient penalty ($\lambda_{GP}=0,0.1,1$) and different strengths of BC term (X-axis, $\alpha=0$ for entire BC and $\alpha=4$ for RL) on Adroit tasks.}\vspace{-0.2cm}
    \label{fig:tech_gp_save_the_failed_colseness_constraint} 
\end{figure*}
\begin{figure*}[!ht]
    \centering
    \includegraphics[width=\linewidth]{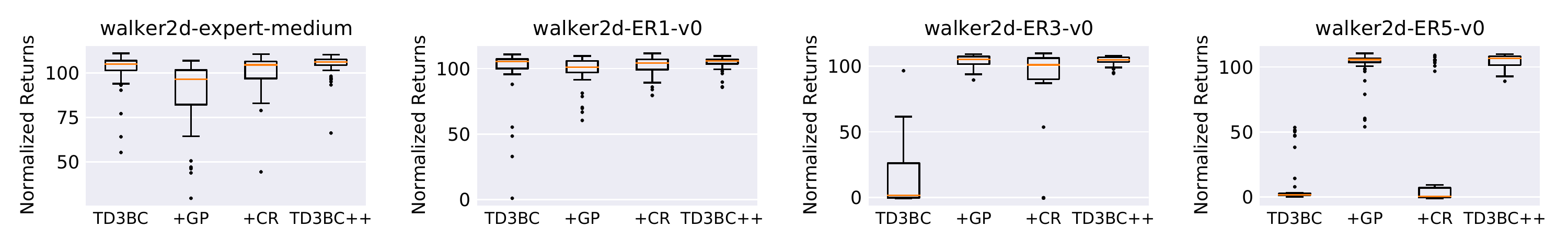}\vspace{-0.4cm}
    \caption{Ablation stduy. Box plot. We run original TD3+BC, TD3+BC with gradient penalty (+GP), TD3+BC with critic weighted constraint relaxation (+CR), and TD3BC++ on walker2d tasks.}\vspace{-0.2cm}
    \label{fig:ablation}
\end{figure*}

To investigate this, we run TD3+BC plus GP, changing the hyperparameter $\alpha$ to control the strength of BC term (Equation \ref{eq:td3_bc}). Note that with $\alpha=1$, the agent prefers imitation while with $\alpha=4$ for RL. 
Figure \ref{fig:tech_gp_save_the_failed_colseness_constraint} demonstrates that GP alleviates the dependence on policy constraints for TD3+BC, and thus may save it from degrading to behavioral cloning. 

It is not surprising that GP can reduce the dependence on the strength of constraints for policy constraint based offline RL algorithms. Because the failed closeness constraint can be caused not only by the policy improvement on contaminated datasets but also by poor closeness metrics.

\vspace{-0.2cm}
\paragraph{Ablation study (Figure \ref{fig:ablation}).} 

We ablate the effects of the two proposed techniques when applied individually. For datasets contain many low-level demonstrations (ER3 and ER5 settings), the gradient penalty stabilizes Q-values and prevents complete catastrophic failure, and the constraint relaxation with polished Q-weights brings performance back up to the expert level. 
For datasets that contain a medium level or a low proportion of random demonstrations (EM and ER1 settings), catastrophic failures do not occur. In this case, constraint relaxtion alone is effective, and it performs better in conjunction with a polished Q-function.

\vspace{-0.2cm}
\paragraph{Comparison with the na\"ive solution (Table \ref{tab:performance_on_mujoco} and \ref{tab:performance_on_adriot}).} $\%$BC and $\%$TD3+BC show slight resistance to performance degradation and catastrophic failures. We also set $X$ to $X\pm10$ and find it performs worse.

\vspace{-0.2cm}
\paragraph{Gradient penalty w.r.t. input states.} 
We also investigate the effect of gradient penalty w.r.t. input states. However, we discover experimentally that it performs much worse.

\vspace{-0.2cm}
\paragraph{Computational cost comparison.} 
We train TD3+BC and TD3BC++ agents for 1 million time steps. The wall clock time of TD3+BC is 160m, and 173m for TD3BC++, indicating that the two techniques proposed in this paper are light and efficient plugins for policy constrained offline RL.

\section{Related work}

\vspace{-0.2cm}
\paragraph{Policy constrained offline RL.} 
One main approach for offline RL is to enforce the learned policy stay close to the behavior policy, involved with various closeness measurements such as KL-divergence~\cite{jaques2019way}, maximum mean discrepancy (MMD)~\cite{kumar2019stabilizing}, Wasserstein distance~\cite{wu2019behavior}, Fisher divergence~\cite{kostrikov2021offline} and even Euclidean distance~\cite{fujimoto2021minimalist}. 
Closeness constraints could help avoid OOD actions. However, when training on contaminated datasets with non-expert demonstrations, a common setting in real-world applications, these methods show performance degradation and even catastrophic failure in our observation. The proposed two techniques serve to mitigate such issues.

\vspace{-0.2cm}
\paragraph{Value regularization offline RL.} 
Another offline RL approach is modifying the Q-values to prevent overestimation on OOD actions. This can be achieved by directly penalizing the Q-values of OOD actions in the regression target, e.g., CQL~\cite{kumar2020conservative}, IQL~\cite{kostrikov2021offlineiql}, SAC-N~\cite{an2021uncertainty} or discounting them with uncertainty measurements e.g., UWAC~\cite{wu2021uncertainty}, EDAC~\cite{an2021uncertainty}, PBRL~\cite{bai2022pessimistic}, RORL~\cite{yang2022rorl}. The proposed GP aims not to prevent the OOD actions but rather to minimize the negative impact (unstable Q-function gradients) of OOD actions caused by the failed closeness constraint on non-expert states.

\vspace{-0.2cm}
\paragraph{Lipschitzness in reinforcement learning.}
Our method penalizes the sharp gradient derived from the critic, which is similar to enforcing the learned Q-function to be locally Lipschitz-continuous. Lipschitzness is often used for stabilizing generative adversarial network (GAN) training. It can be achieved by gradient penalty~\cite{thanh2019improving}, spectral normalization~\cite{miyato2018spectral}, gradient normalization~\cite{wu2021gradient}, etc. In online RL, \cite{gogianu2021spectral} use spectral normalization to better the optimization dynamics of the Bellman backups. \cite{lecarpentier2020lipschitz} utilize Lipschitz continuity between MDPs to transfer knowledge for lifelong RL tasks. 
\cite{memarian2021robust} promotes a local Lipschitz discriminator for robust generative adversarial imitation learning (GAIL) algorithms. Our method aims to minimize the effects of the non-expert trajectories, which carries a different motivation.

\vspace{-0.2cm}
\paragraph{Learning from non-expert trajectories.}
This work focus on the influence of non-expert trajectories in the offline RL setting. 
Similarly, \cite{zhang2021reducing} proposes an algorithm to address this issue, assuming clustering methods can recognize transitions from different behavior policies. 
Besides, \cite{zhang2021corruption} consider the task of training policy from datasets with adversarial corruptions. Our method does not rely on such assumptions. In addition, \cite{nair2020accelerating} proposes an advantage-based weighting to ditinguish expert and non-expert data, which is superior to our CR technique. We leave this issue for further work.

Learning from non-expert data is also a key challenge in imitation learning. 
Methods in this topic can be mainly divided into two types. Ranking-based methods learn a policy from demonstrations annotated with rankings~\cite{akrour2011preference, brown2019extrapolating, brown2020better, chen2020learning}. Confidence-based methods construct or learn a confidence value function describing the quality of demonstrations and then reweight training samples for imitation~\cite{wang2018exponentially, wu2019imitation, zhu2020learning, tangkaratt2020variational, sasaki2020behavioral, cao2021learning, zhang2021confidence, wang2021learning}. Our method utilizes the learned Q-function to indicate the optimality of the transition, which is close to the confidence-based methods. 

\section{Conclusion}

By analysising the learning behavoirs on dataset generated by multiple distinct behavior policies, we identify two approaches in which non-expert trajectories inhibit policy constrained offline RL: 1) the harmful closeness constraint towards non-expert actions on overlaped states and 2) the failed closeness constraint on non-expert states that causing OOD actions. The proposed CR and GP techniques correspond to these two pathways, respectively, and their effectiveness is empirically evaluated on expert-medium, expert-cloned, and expert-random settings. 

The proposed two plugins together extend the applicability of the policy constraint based offline RL to contaminated datasets. 
Beyond this, the proposed gradient penalty, can help mitigate the negative impacts of OOD actions when the policy constraint fails (on contaminated datasets) or when the constraint has to be weak (to improve over the behavior policies). We hope this finding would encourage researchers to tackle offline reinforcement learning differently than regularizing the Q-values or constraining the learned policy.

\newpage
{\footnotesize
\bibliographystyle{abbrv}
\bibliography{reference}
}



\clearpage

\newpage
\appendix
\part*{Appendices}
\section{The contaminated D4RL datasets}
\label{app:the_modified_dataset}
We first provide details about the contaminated D4RL datasets to accommodate reproducibility. Then we give evidence to support the description of the different state overlaps in Figure~\ref{fig:state_overlap}. And finally, we provide some perspectives on the proposed contaminated D4RL datasets.

\subsection{Dataset statistics}
\paragraph{The contaminated D4RL mujoco gym datasets.} Each contaminated dataset contains trajectories from two different levels of policies. We use the D4RL medium-expert datasets for expert-medium settings, which are combinations of expert and medium-level trajectories and are about twice the size of the corresponding expert or medium datasets.

We also contaminate the expert demonstrations with random-level trajectories. For example, ER-1 (short for Expert-random-10) represents a dataset constructed by first loading an expert dataset and then replacing the final 10 percent transitions with tuples from random trajectories (the first 10 percent in the corresponding random dataset). 
We provide statistics:
\vspace{-0.3cm}
\begin{table}[!h]
\centering
\caption{Statistics of the contaminated D4RL mujoco gym datasets (expert-random).}
\begin{adjustbox}{width={\textwidth},totalheight={\textheight},keepaspectratio}%
\renewcommand{\arraystretch}{1.1}
\begin{tabular}{cccccc}
\toprule
\textbf{Task} & \textbf{Setting} & \textbf{Total transition} & \textbf{Expert transition} & \textbf{Random transition} & \textbf{Averaged reward} \\ \hline
\multirow{4}*{Hopper}
            & Expert-random-10 & 999,034          & 899,131           & 99,903            & 3.53            \\
            & Expert-random-30 & 999,034          & 699,324           & 299,710           & 3.33            \\
            & Expert-random-50 & 999,034          & 499,517           & 499,517           & 3.13            \\
            & Expert-random-70 & 999,034          & 299,711           & 699,323           & 2.93            \\ \hline
\multirow{4}*{Walker}
            & Expert-random-10 & 999,304          & 899,374           & 99,930            & 4.25            \\
            & Expert-random-30 & 999,304          & 699,513           & 299,791           & 3.33            \\
            & Expert-random-50 & 999,304          & 499,652           & 499,652           & 2.39            \\
            & Expert-random-70 & 999,304          & 299,792           & 699,512           & 1.45            \\ \hline
\multirow{4}*{Halfcheetah}
            & Expert-random-10 & 998,999          & 899,100           & 99,899            & 10.94           \\
            & Expert-random-30 & 998,999          & 699,300           & 299,699           & 8.44            \\
            & Expert-random-50 & 998,999          & 499,500           & 499,499           & 5.95            \\
            & Expert-random-70 & 998,999          & 299,700           & 699,299           & 3.46            \\ \bottomrule
\end{tabular}
\end{adjustbox}
\vspace{-0.3cm}
\end{table}

\paragraph{The contaminated D4RL adroit datasets.} The contaminated D4RL Adroit datasets can be constructed in a similar way, except that the non-expert trajectories are from cloned agents, i.e., imitation policies trained from the human-level demonstrations. Statistics of the contaminated D4RL Adroit datasets used in our evaluations are:
\vspace{-0.3cm}
\begin{table}[h]
\centering
\caption{Statistics of the contaminated D4RL Adroit Datasets (expert-cloned).}
\begin{adjustbox}{width={\textwidth},totalheight={\textheight},keepaspectratio}%
\renewcommand{\arraystretch}{1.1}
\begin{tabular}{cccccc}
\toprule
\textbf{Task} & \textbf{Setting} & \textbf{Total transition} & \textbf{Expert transition} & \textbf{Cloned transition} & \textbf{Averaged reward} \\ \hline
\multirow{4}*{Door}
         & Expert-cloned-10 & 995,000          & 895,500           & 99,500            & 13.08           \\
         & Expert-cloned-30 & 995,000          & 696,500           & 298,500           & 10.13           \\
         & Expert-cloned-50 & 995,000          & 497,500           & 497,500           & 7.16            \\
         & Expert-cloned-70 & 995,000          & 298,500           & 696,500           & 4.83            \\ \hline
\multirow{4}*{Hammer}   
         & Expert-cloned-10 & 995,000          & 895,500           & 99,500            & 55.31           \\
         & Expert-cloned-30 & 995,000          & 696,500           & 298,500           & 42.79           \\
         & Expert-cloned-50 & 995,000          & 497,500           & 497,500           & 30.06           \\
         & Expert-cloned-70 & 995,000          & 298,500           & 696,500           & 19.32           \\ \hline
\multirow{4}*{Pen}      
         & Expert-cloned-10 & 495,000          & 445,500           & 49,500            & 30.73           \\
         & Expert-cloned-30 & 495,000          & 346,500           & 148,500           & 25.96           \\
         & Expert-cloned-50 & 495,000          & 247,500           & 247,500           & 21.05           \\
         & Expert-cloned-70 & 495,000          & 148,500           & 346,500           & 20.61           \\ \hline
\multirow{4}*{Relocate} 
         & Expert-cloned-10 & 995,000          & 895,500           & 99,500            & 19.44           \\
         & Expert-cloned-30 & 995,000          & 696,500           & 298,500           & 15.10           \\
         & Expert-cloned-50 & 995,000          & 497,500           & 497,500           & 10.80           \\
         & Expert-cloned-70 & 995,000          & 298,500           & 696,500           & 8.27            \\ \bottomrule
\end{tabular}
\end{adjustbox}
\vspace{-0.3cm}
\end{table}

\subsection{Different state overlaps}
\label{app:different_state_overlaps}
In Figure \ref{fig:state_overlap}, We highlight two distinct situations involving different expert and non-expert state overlaps. When states visited by experts show great overlaps with non-expert states, the harmful closeness constraint toward non-expert decisions inhibits. For situation that expert states and non-expert states are well-distinguished, the failed closeness constraint happens as the learned policy is improved, showing different policy actions for dataset non-expert states. 

We here provided some visualizations of the distribution of expert and non-expert states in the expert-medium, expert-random, and expert-cloned settings.
\vspace{-0.3cm}
\begin{figure}[h]
    \centering
    \includegraphics[width=0.9\linewidth]{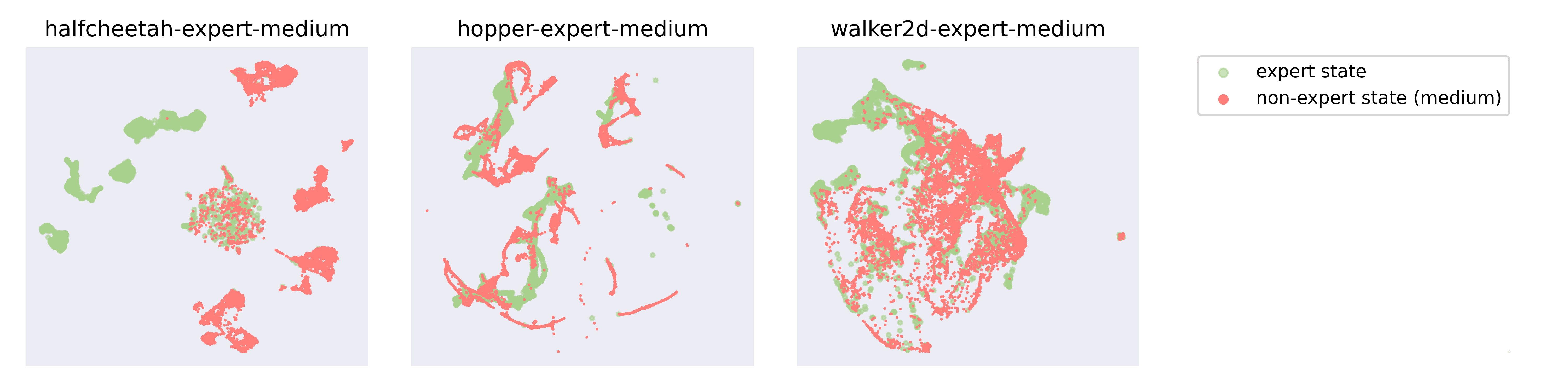}
    \includegraphics[width=0.9\linewidth]{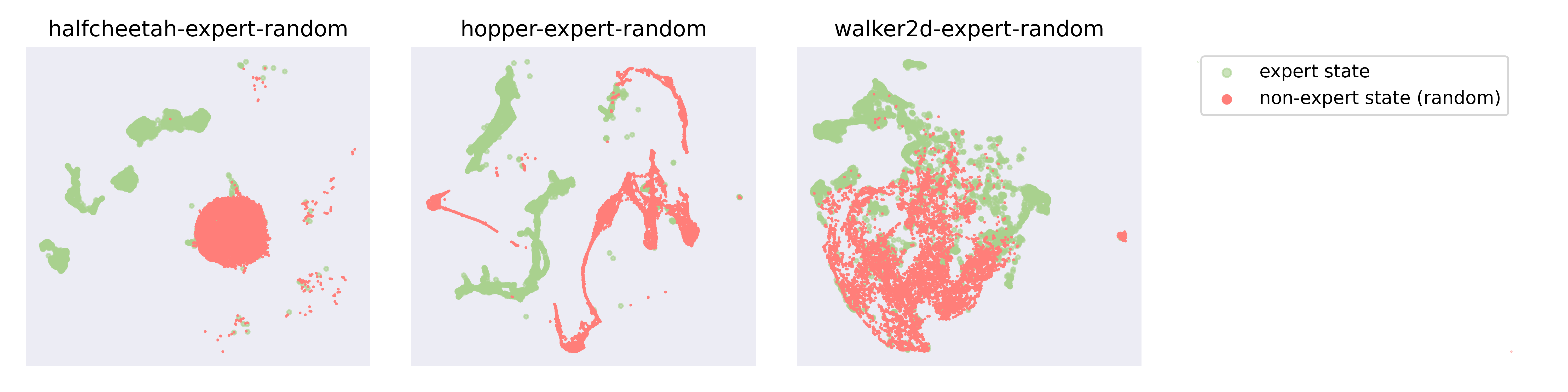}
    \includegraphics[width=0.9\linewidth]{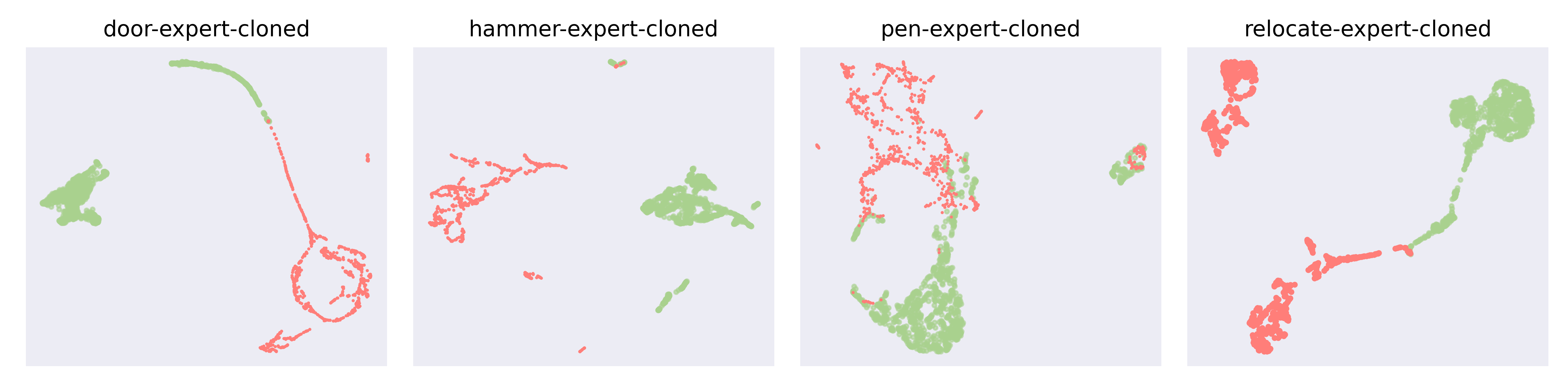}
    \vspace{-0.4cm}
    \caption{We use UMAP to reduce the dimensionality of states in different D4RL tasks. Expert states are visited by expert behavior policies, and non-expert states are from the medium, random or the cloned policies. We enlarge the dot size of expert states for clarity.}
    \vspace{-0.3cm}
    \label{fig:state_overlaps}
\end{figure}

\vspace{-0.25cm}
\paragraph{Great state overlaps.} In this situation, e.g., expert-medium datasets, states visited by expert-level behavior policies show great overlap with that of medium agents. Therefore, the closeness constraint towards non-expert actions may prevent the learned policy from moving closer to the expert decisions. Although offline RL with support-based policy constraints, e.g., BEAR, holds the promise to solve such issues, their exquisite metrics are often difficult to achieve. 
We alleviate the observed performance degradation by introducing a Q-weight for the policy constraint based method (+CR).

\vspace{-0.25cm}
\paragraph{Less state overlaps.} For datasets contaminated by low-level demonstrations, e.g., random and cloned level data, the expert and non-expert states show greatly different distributions. In this case, policy improvement inevitably changes the policy actions on non-expert states, increasing the probability of generating OOD decisions. This can be dangerous as OOD actions have been widely recognized as the source of exploding value function and the failed learning process. We suppress the OOD actions with the proposed GP technique. 

\vspace{-0.25cm}
\paragraph{The success of BC on adriot tasks.} For the simple mujoco tasks (controlling 3 or 6-DoF robotics), states visited by expert policies show great overlap with those visited by non-expert policies. With overlapped states, constraints toward non-expert actions affect the decision quality on expert states. In contrast, such impacts are eliminated with fewer overlaps under the complex Adroit tasks (24-DoF robotics). The records of non-expert state-action pairs less influence the decisions for expert states, thus leading to the success of BC agents on complex Adroit tasks.

\subsection{The harmful and the failed closeness constraint.}

\begin{figure}[h]
    \centering
    \includegraphics[width=0.3\linewidth]{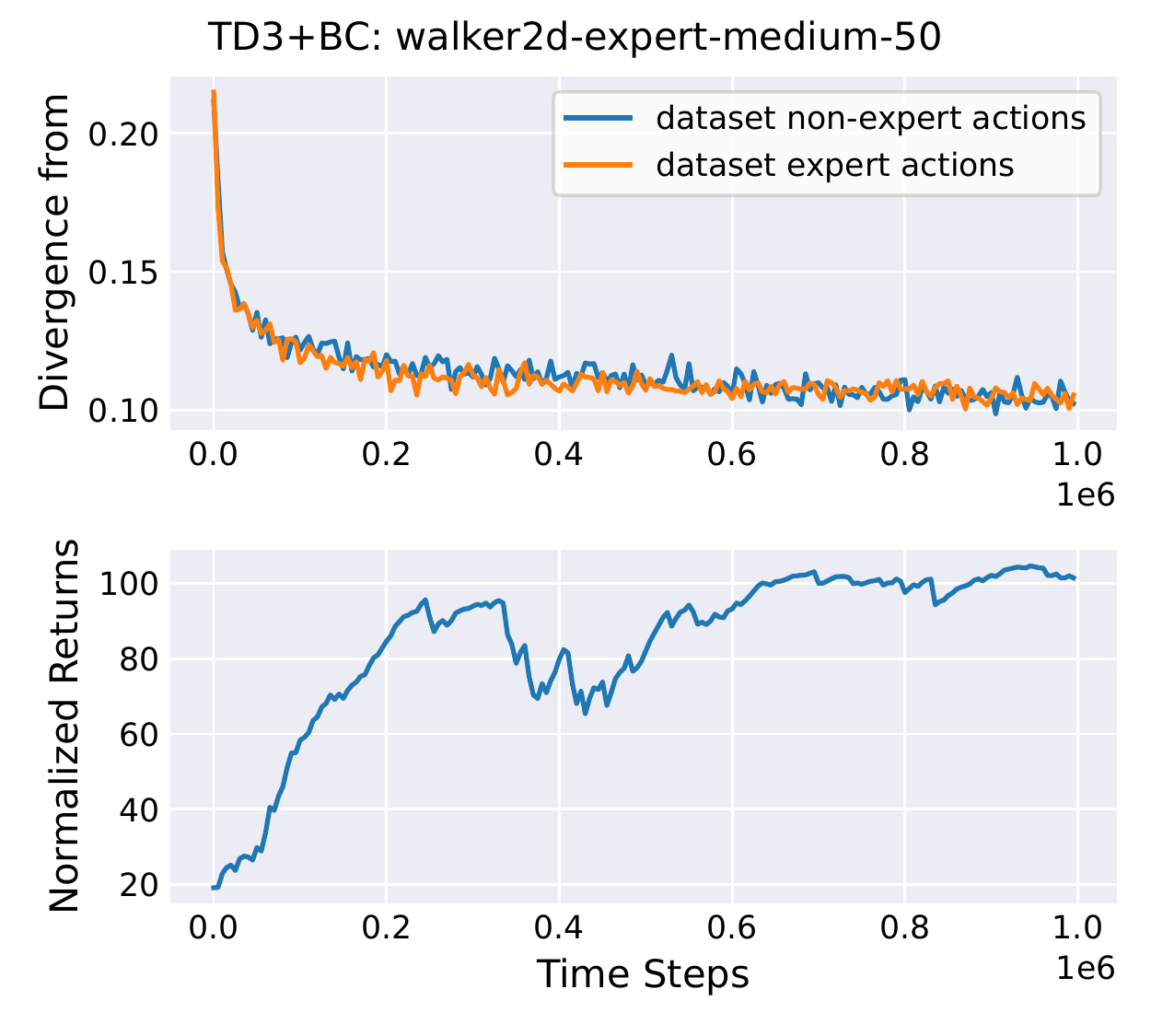}
    \includegraphics[width=0.3\linewidth]{pics/Figure_The_Failed_Closeness_Constraint.pdf}
    \includegraphics[width=0.3\linewidth]{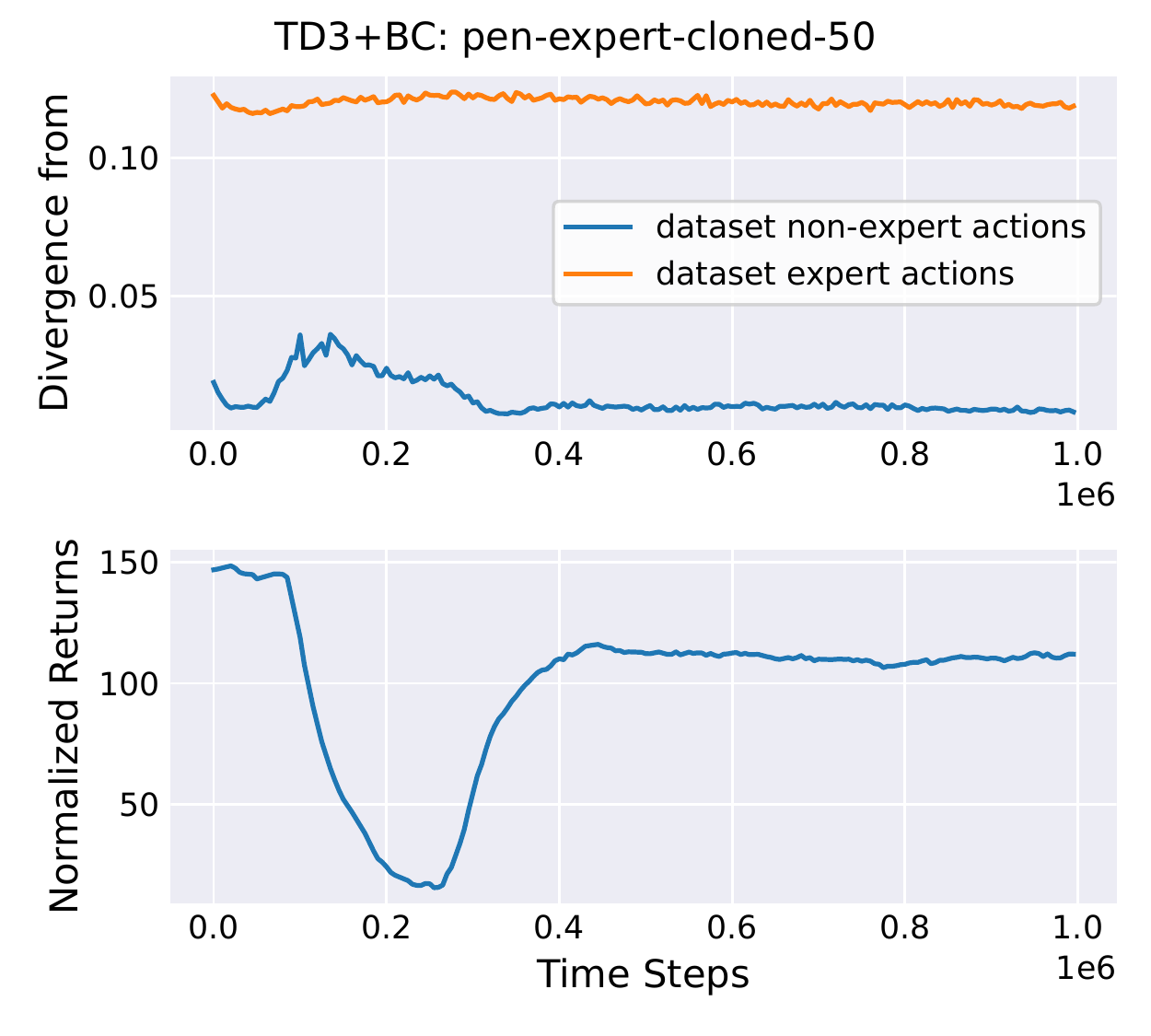}
    \vspace{-0.4cm}
    \caption{Visulation of the harmful closeness constraint (left) and the faild closeness constraint (middle and right). \textbf{Divergence:} the 75th percentile of the squared error between the decisions from the learned policy and the corresponding dataset actions.}
    \vspace{-0.3cm}
    \label{fig:visulation_harmful_and_failed}
\end{figure}

\paragraph{The harmful closeness constraint.} When expert and non-expert behavior policies share great state overlaps, two similar dataset states may correspond to two (or more) completely different actions. The closeness constraint towards non-expert one would inhibit the policy improvement in a supervised fashion. 

We visualize it in Figure~\ref{fig:visulation_harmful_and_failed}, left. The divergence between decisions from the learned policy and the expert behaviors becomes smaller as the policy improves. However, the distance to the non-expert dataset actions also becomes smaller. This contributes to the observed performance degradation. 
Although ideal support-based policy constraint methods hold the promise to handle this situation, empirically, their performance deteriorates. 

\vspace{-0.2cm}
\paragraph{The failed closeness constraint.} The main contribution of this paper is the finding that the policy improvement induces the failed closeness constraint on non-expert dataset states. That is, the policy improvement implicitly drives the learned policy to be different from the decisions recorded for non-expert states, inducing dangerous OOD actions. 

In the middle and the right-hand side of Figure~\ref{fig:visulation_harmful_and_failed}, we visualize the failed closeness constraint on non-expert dataset states, which happens after the policy achieves a good performance.

\subsection{Remarks}

\vspace{-0.2cm}
\paragraph{Are expert-random datasets too extreme for mimicking real-life scenarios?} The proposed contaminated dataset can be used to simulate the training behavior on a dataset containing two distinct behavior policies. In this context, what matters is not the non-expert behavior policies' quality but the states' overlap between the experts and non-experts. 

Such datasets do not necessarily have to be constructed by expert and random policies. For example, the catastrophic failures on expert-cloned datasets, see Table~\ref{tab:performance_on_adriot}, indicate that the learned Q-functions are destroyed by the sharp Q-function gradients, though the cloned behavior policies are far away from randoms.

\vspace{-0.2cm}
\paragraph{Difference with the D4RL replay datasets.}
This work focuses on training from contaminated datasets, including three different instances, expert-medium, expert-cloned, and expert-random. Another similar setting is the medium-replay or the full-replay dataset, which records all the interactions during the training. However, we think there is a significant difference between the two settings. 

Firstly, the contaminated dataset better fits offline reinforcement learning scenarios. Recall that the primary motivation of offline RL is to avoid the risky interactions for training policy from random initializations. Thus, it is unfeasible to collect logs like medium-replay or full-replay datasets in most situations. On the other hand, the contaminated dataset is used to simulate the training behavior on a dataset where two different behavioral policies exist. We believe a dataset with multiple behavior policies is a really common setting for real-life applications. 

Another difference is that the medium-replay and full-replay dataset have a wider distribution of state-action pairs and thus a lower probability of inducing OOD actions than the contaminated dataset considered in this paper. The proportions of expert (medium) trajectories in the replay datasets may also be smaller.

\section{Lipschitz property of the learned Q-function over action domain}
\label{app:lipschitzness}
In this section, we provide proof for Theorem~\ref{theo:lipschitzness}. In order to prove the desired Lipschitz continuity property of the learned Q-function, we need to give an upper bound of the magnitude of the Q-function gradient with respect to the input action, i.e., $\| \frac{\partial{Q(s_t, a_t)}}{\partial{a_t}}\|_F$ is bounded. 

For notational clarity, we use $\mathbb{E}_{s_{t+k} | s_t}[\cdot]$ to denote the expectation of the argument with respect to the conditional distribution of future state $s_{t+k}$ given that the agent starts from current $s_t$ and follows the policy $\pi(a_t|s_t)$, i.e., $\mathbb{E}_{s_t} \Big[ \big(\prod_{j=0}^k \pi(a_{t+j} | s_{t+j}) T(s_{t+1+j} | s_{t+j} , a_{t+j}) \big) [\cdot] \Big]$. Then we can rewrite the learned Q-function $Q^{\pi}(s_t, a_t)$ as $\sum_{k=0}^{\infty} \gamma^k \mathbb{E}_{s_{t+k} | s_t} [r(s_{t+k},a_{t+k})]$. Our proof starts from the upper bound of the Jacobian of the Q-function w.r.t one dimension of the action space. In such a case, we denote the $i-th$ dimension of the action space as $a_{t}^i$. We then drive to the case of multi-dimensional action space and complete our proof.

\begin{proposition}
\label{prop:one_dimension_lemma}
Suppose a policy $\pi$ on an MDP $M=\langle \mathcal{S}, \mathcal{A}, T, d_0, r, \gamma \rangle$ satisfies the following inequality for any given non-negative integer $t$:
\begin{equation}
    \Big\| \frac{\partial{\pi(a_{t+1} | s_{t+1}) }}{\partial{a_t}} \Big\|_F \leq L_{\pi, T},
\end{equation}

then it holds for any given non-negative integer $k$, and $t$:
\begin{equation}
    \Big| \nabla_{a_t^i} \mathbb{E}_{s_{t+k}|s_t} [r(s_{t+k}, a_{t+k})] \Big|
    \leq L_{\pi, T} \cdot \mathbb{E}_{s_{t+1}|s_t}  \Big| \nabla_{a_{t+1}^i} \mathbb{E}_{s_{t+k}|s_{t+1}} [r(s_{t+k}, a_{t+k})] \Big|.
\end{equation}
\end{proposition}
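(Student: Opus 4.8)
The plan is to read the statement as a single-step recursion and prove it by peeling off the first transition out of $s_t$, then routing the $a_t$-derivative through that first step. Using the Markov property, and treating the action at the conditioning step as held fixed (as in a $Q$-value), decompose the $k$-step expected reward started at $s_t$ into a one-step average of the $(k-1)$-step expected reward started at $s_{t+1}$:
\begin{equation}
\mathbb{E}_{s_{t+k}|s_t}\big[r(s_{t+k},a_{t+k})\big]
= \mathbb{E}_{s_{t+1}|s_t}\big[\, \mathbb{E}_{s_{t+k}|s_{t+1}}[r(s_{t+k},a_{t+k})] \,\big],
\end{equation}
where the outer average is over $s_{t+1}$ (and the ensuing action $a_{t+1}$) generated by one step of the dynamics and the policy out of $(s_t,a_t)$. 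Conditioning on $s_{t+1}$ screens off the rest of the trajectory, so $a_t$ influences the right-hand side only through the first transition and the induced $a_{t+1}$.

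Next I would differentiate this identity in $a_t^i$. Differentiation under the expectation is legitimate because the reward is bounded and the policy and transition are smooth by hypothesis, so dominated convergence applies. By the chain rule, $\nabla_{a_t^i}$ of the inner integrand is the Jacobian of $\pi(a_{t+1}\mid s_{t+1})$ with respect to $a_t$ contracted against $\nabla_{a_{t+1}}\mathbb{E}_{s_{t+k}|s_{t+1}}[r(s_{t+k},a_{t+k})]$. Taking absolute values, pulling them inside the expectation over $s_{t+1}$ by the triangle inequality for integrals (Jensen), and bounding the Jacobian factor with the hypothesis $\big\|\partial\pi(a_{t+1}\mid s_{t+1})/\partial a_t\big\|_F \le L_{\pi,T}$ together with sub-multiplicativity of the Frobenius norm (equivalently Cauchy--Schwarz on the relevant row), the uniform constant $L_{\pi,T}$ leaves the expectation and one obtains
\begin{equation}
\big|\nabla_{a_t^i}\mathbb{E}_{s_{t+k}|s_t}[r(s_{t+k},a_{t+k})]\big|
\le L_{\pi,T}\cdot \mathbb{E}_{s_{t+1}|s_t}\big|\nabla_{a_{t+1}^i}\mathbb{E}_{s_{t+k}|s_{t+1}}[r(s_{t+k},a_{t+k})]\big|,
\end{equation}
which is the claim. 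The degenerate index $k=0$, where the left-hand side is just $|\nabla_{a_t^i} r(s_t,a_t)|$ and the right-hand side is not literally meaningful, is best recorded separately; it is disposed of directly via the reward bound $L_r$ in the proof of Theorem~\ref{theo:lipschitzness}, so here it suffices to treat $k\ge 1$.

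The step I expect to be the main obstacle is making the chain-rule manipulation precise, because $\big\|\partial\pi(a_{t+1}\mid s_{t+1})/\partial a_t\big\|_F$ implicitly bundles the sensitivity of the realized next state $s_{t+1}$ to $a_t$ (through $T$) with the sensitivity of the policy to its input. One therefore has to fix a consistent differentiability convention for $T$ and $\pi$ --- e.g.\ a reparametrization of the transition and policy, or treating the next-state/next-action map as differentiable almost everywhere --- under which $\nabla_{a_t^i}\mathbb{E}_{s_{t+k}|s_t}[r]$ genuinely factors as that Jacobian acting on an $a_{t+1}$-gradient, with no uncontrolled state-gradient term left over. A secondary but necessary piece of care is bookkeeping: keeping the component index $i$ aligned on both sides and tracking the one-step time shift, so that iterating the recursion $k$ times in Theorem~\ref{theo:lipschitzness} telescopes into a factor $L_{\pi,T}^{\,k}$ multiplying a single reward-gradient bound, which then sums geometrically under $\gamma L_{\pi,T}<1$.
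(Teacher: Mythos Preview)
Your proposal is correct and follows essentially the same route as the paper: decompose the $k$-step expectation via the Markov property into a one-step tower, apply the chain rule through $a_{t+1}$, pull the absolute value inside the outer expectation, and bound the Jacobian factor by $L_{\pi,T}$. The paper's proof is terser --- it writes the chain rule as a scalar factor $|\partial a^i_{t+1}/\partial a^i_t|$ rather than the full Jacobian contraction you describe, and it silently equates $|\nabla\mathbb{E}[\cdot]|$ with $\mathbb{E}|\nabla[\cdot]|$ where you correctly invoke Jensen --- but the argument is the same, and your added care about differentiation under the integral, the reparametrization convention, and the $k=0$ edge case only sharpens it.
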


\begin{proof}
\label{proof:one_dimension_lemma}
\begin{align*}
    \Big| \nabla_{a_t^i} \mathbb{E}_{s_{t+k}|s_t} [r(s_{t+k}, a_{t+k})] \Big|
    &= \Big| \nabla_{a^i_{t+1}} \mathbb{E}_{s_{t+1}|s_t} \mathbb{E}_{s_{t+k}|s_{t+1}} [r(s_{t+k}, a_{t+k})] 
    \cdot \frac{\partial{a^i_{t+1}}}{\partial{a^i_t}}\Big| \\
    &\leq \Big| \nabla_{a^i_{t+1}} \mathbb{E}_{s_{t+1}|s_t} \mathbb{E}_{s_{t+k}|s_{t+1}} [r(s_{t+k}, a_{t+k})] \Big|
    \cdot \Big|\frac{\partial{a^i_{t+1}}}{\partial{a^i_t}}\Big| \\
    &=\Big|\frac{\partial{a^i_{t+1}}}{\partial{a^i_t}}\Big| \cdot 
    \mathbb{E}_{s_{t+1}|s_t} \Big| \nabla_{a^i_{t+1}} \mathbb{E}_{s_{t+k}|s_{t+1}} [r(s_{t+k}, a_{t+k})] \Big| \\
    &\leq L_{\pi, T} \cdot \mathbb{E}_{s_{t+1}|s_t}  \Big| \nabla_{a^i_{t+1}} \mathbb{E}_{s_{t+k}|s_{t+1}} [r(s_{t+k}, a_{t+k})] \Big| \\
\end{align*}
\end{proof}

The above proposition gives a derivation from a mild assumption, which is helpful for our next step proof. 

\begin{proposition}
\label{prop:one_dimension}
Suppose a policy $\pi$ on an MDP $M=\langle \mathcal{S}, \mathcal{A}, T, d_0, r, \gamma \rangle$ satisfies the following inequality for any given non-negative integer $t$:
\begin{align}
    \Big\| \frac{\partial{\pi(a_{t+1} | s_{t+1})}}{\partial{a_t}} \Big\|_F \leq L_{\pi,T} \\
    \Big\| \frac{\partial{r(s_t, a_t )}}{\partial{a_t}}  \Big \|_F \leq L_r, \quad
\end{align}

then it holds for any given non-negative integer $t$:
\begin{equation}
    \Big| \nabla_{a_t^i} \mathbb{E}_{s_{t+k}|s_t} [r(s_{t+k}, a_{t+k})] \Big| \leq L_{\pi,T}^k \cdot L_r.
\end{equation}
\end{proposition}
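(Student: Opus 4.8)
The plan is to prove Proposition~\ref{prop:one_dimension} by induction on $k$, using Proposition~\ref{prop:one_dimension_lemma} as the engine for the inductive step. Throughout, fix the coordinate index $i$ of the action space, and treat the bound as a statement quantified over all non-negative integer time indices $t$ simultaneously (this uniformity in $t$ is what makes the induction go through cleanly).

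For the base case $k=0$, the conditional expectation $\mathbb{E}_{s_{t}\mid s_t}$ is degenerate, so the quantity to be bounded collapses to $\bigl|\nabla_{a_t^i} r(s_t,a_t)\bigr|$, i.e. the magnitude of a single partial derivative of the reward with respect to the input action. Since any one coordinate of a gradient vector is dominated by its Euclidean (hence Frobenius) norm, this is at most $\bigl\|\nabla_{a_t} r(s_t,a_t)\bigr\|_F \le L_r = L_{\pi,T}^{0}\,L_r$, which is exactly the claim at $k=0$.

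For the inductive step, assume the bound holds at $k-1$ for every time index; applying it at the shifted index $t+1$ and using $(t+1)+(k-1)=t+k$ yields the pointwise estimate $\bigl|\nabla_{a_{t+1}^i}\mathbb{E}_{s_{t+k}\mid s_{t+1}}[r(s_{t+k},a_{t+k})]\bigr| \le L_{\pi,T}^{k-1}\,L_r$ for every realization of $s_{t+1}$. Substituting this into the right-hand side of Proposition~\ref{prop:one_dimension_lemma}, and noting that $T(\cdot\mid s_t,a_t)$ is a probability distribution so that the expectation of the constant bound $L_{\pi,T}^{k-1}L_r$ is itself, gives $\bigl|\nabla_{a_t^i}\mathbb{E}_{s_{t+k}\mid s_t}[r(s_{t+k},a_{t+k})]\bigr| \le L_{\pi,T}\cdot L_{\pi,T}^{k-1}L_r = L_{\pi,T}^{k}\,L_r$, closing the induction.

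The argument is mostly bookkeeping once Proposition~\ref{prop:one_dimension_lemma} is available, so I do not expect a serious obstacle; the only points that need attention are (i) justifying the interchange of $\nabla_{a_t^i}$ with the expectation over future states — this is already carried out inside the proof of Proposition~\ref{prop:one_dimension_lemma} and tacitly relies on mild regularity of $\pi$, $T$, and $r$ so that differentiation under the integral sign is valid; (ii) the coordinatewise-versus-Frobenius comparison invoked in the base case; and (iii) keeping the time indices aligned so that the inductive hypothesis is genuinely applied at the shifted index $t+1$ with one fewer step, matching the $\mathbb{E}_{s_{t+k}\mid s_{t+1}}$ appearing on the right-hand side of Proposition~\ref{prop:one_dimension_lemma}.
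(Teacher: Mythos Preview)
Your proposal is correct and follows essentially the same approach as the paper: both arguments repeatedly apply Proposition~\ref{prop:one_dimension_lemma} to peel off one time step and gain a factor of $L_{\pi,T}$, terminating with the reward-gradient bound $L_r$ at the degenerate expectation. The paper simply unrolls this recursion with a ``$\cdots$'' chain rather than writing it as a formal induction on $k$, but the content is identical; your explicit handling of the base case and the time-index shift is, if anything, a bit cleaner than the paper's presentation.
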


\begin{proof}
\label{proof:one_dimension}
\begin{align*}
    \Big| \nabla_{a_t^i} \mathbb{E}_{s_{t+k}|s_t} [r(s_{t+k}, a_{t+k})] \Big| 
    &\leq L_{\pi,T} \cdot \mathbb{E}_{s_{t+1}|s_t}  \Big| \nabla_{a_{t+1}^i} \mathbb{E}_{s_{t+k}|s_{t+1}} [r(s_{t+k}, a_{t+k})] \Big| \\
    &\leq L_{\pi,T} \cdot \mathbb{E}_{s_{t+1}|s_t} \cdots L_{\pi,T} \cdot \mathbb{E}_{s_{t+k}|s_{t+k-1}} \Big| \nabla_{a_{t+k}^i} \mathbb{E}_{s_{t+k}|s_{t+k}} [r(s_{t+k}, a_{t+k})] \Big| \\
    &= L_{\pi,T}^k \cdot \mathbb{E}_{s_{t+k}|s_t}  \Big| \nabla_{a_{t+k}^i} \mathbb{E}_{s_{t+k}|s_{t+k}} [r(s_{t+k}, a_{t+k})] \Big| \\
    &= L_{\pi,T}^k \cdot \mathbb{E}_{s_{t+k}|s_t} \Big| \nabla_{a_{t+k}^i} r(s_{t+k}, a_{t+k}) \Big| \\
    &\leq L_{\pi,T}^k \cdot \mathbb{E}_{s_{t+k}|s_t} \cdot L_r \\
    &= L_{\pi,T}^k \cdot L_r
\end{align*}
\end{proof}

Then we consider the case of multi-dimensional action space. An upper bound formulation of the learned Q-function gradient w.r.t. action can be derived by using Proposition \ref{prop:one_dimension_lemma} and Proposition \ref{prop:one_dimension}.

\textbf{Theorem \ref{theo:lipschitzness}.} \textit{ Suppose a policy $\pi(a_t | s_t)$ on an MDP $M=\langle \mathcal{S}, \mathcal{A}, T, d_0, r, \gamma \rangle$ satisfies the inequality $ \Big\| \frac{\partial{\pi(a_{t+1} | s_{t+1})}}{\partial{a_t}} \Big\|_F \leq L_{\pi,T} < 1$ and the reward function $r(s_t, a_t)$ satisfies $\Big\| \frac{\partial{r(s_t, a_t )}}{\partial{a_t}}  \Big \| \leq L_r$. If we denote the dimension of the action space as $N$, then the magnitude of the gradient of the learned Q-function w.r.t. action can be upperbounded as:}
\begin{equation}
\label{euqation:upperbound_in_app}
    \Big\| \nabla_{a_t} Q^{\pi}(s_t,a_t) \Big\|_F \leq \frac{\sqrt{N} L_r}{1- \gamma L_{\pi, T}}.
\end{equation}

\begin{proof}
\label{proof:multi_dimension}
\begin{align*}
    \Big\| \nabla_{a_t} Q^{\pi}(s_t,a_t) \Big\|_F^2 
    &= \sum_{i=0}^N \Big( \nabla_{a_t^i} Q^{\pi}(s_t, a_t)\Big)^2 \\
    &= \sum_{i=0}^N \Big( \sum_{k=0}^{\infty} \gamma^k \nabla_{a_t^i} \mathbb{E}_{s_{t+k} | s_t}[r(s_{t+k}, a_{t+k})] \Big)^2 \\
    &\leq \sum_{i=0}^N \Big( \sum_{k=0}^{\infty} \gamma^k \Big| \nabla_{a_t^i}  \mathbb{E}_{s_{t+k} | s_t}[r(s_{t+k}, a_{t+k})] \Big|\Big)^2 \\
    &= \sum_{i=0}^N \Big( \sum_{k=0}^{\infty} \gamma^k \cdot L_{\pi, T}^k \cdot L_r \Big)^2 \\
    &= N \big( L_r \sum_{k=0}^{\infty} (\gamma L_{\pi, T})^k \big)^2,
\end{align*}
finally, we have:
\begin{align*}
    \Big\| \nabla_{a_t} Q^{\pi}(s_t,a_t) \Big\|_F 
    &\leq \sqrt{N} L_r \sum_{k=0}^{\infty} (\gamma L_{\pi, T})^k \\
    &=  \frac{\sqrt{N} L_r}{1- \gamma L_{\pi, T}}
\end{align*}
\end{proof}

To better understand the proposed bound (\ref{euqation:upperbound_in_app}), we give some perspective on the constants in this formulation. Clearly, $L_r$ is the Lipschitz constant of the reward function w.r.t. the input action. Then we consider the meaning of $L_{\pi, T}$. $ \Big\| \frac{\partial{\pi(a_{t+1} | s_{t+1})}}{\partial{a_t}} \Big\|_F $ measures the change in the policy action $a_{t+1}$ at next state $s_{t+1}$ if we give an infinitesimal perturbation in the current policy action $a_t$. We denote its upper bound as $L_{\pi, T}$ as the Jacobian is related with the policy $\pi$ and the environment dynamics $T$:
\begin{align*}
    \frac{\partial{\pi(a_{t+1} | s_{t+1})}}{\partial{a_t}} 
    &= \frac{\partial}{\partial{a_t}} \pi\Big(a_{t+1} | T(s_{t+1} | s_t, a_t)\Big) \\
    &=  \frac{\partial{T(s_{t+1} | s_t, a_t)}}{\partial{a_t}}  \cdot \frac{\partial{\pi(a_{t+1} | s')}}{\partial{s'}} \Big|_{s' = T(s_{t+1} | s_t, a_t)} 
\end{align*}

Then we can derive the upper bound of the Jacobian as:
\begin{align*}
    \Big \|  \frac{\partial{\pi(a_{t+1} | s_{t+1})}}{\partial{a_t}}  \Big \|_F 
    & =  \Big \| \frac{\partial{T(s_{t+1} | s_t, a_t)}}{\partial{a_t}}  \cdot \frac{\partial{\pi(a_{t+1}| s')}}{\partial{s'}} \Big|_{s' = T(s_{t+1} | s_t, a_t)}  \Big \|_F \\
    &\leq  \Big \| \frac{\partial{\pi(a_{t+1} | s_{t+1})}}{\partial{s_{t+1}}} \Big \|_F \cdot \Big \| \frac{\partial{T(s_{t+1} | s_t, a_t)}}{\partial{a_t}} \Big \|_F
\end{align*}
The proposed constant $L_{\pi,T} $ is related with two Lipschitz constants, the first one for the policy $\pi$ w.r.t. the state space and another one for the environment dynamics $T$ w.r.t. the action space.

We refer the interested readers to \cite{memarian2021robust} for the proof of the upper bound for optimal Q-function gradients w.r.t. state space. For the Lipschitz continuity of the value function, see \cite{rachelson2010locality}.

\section{Experiment details}
\label{app:experiment_details}

We run our experiments on a single machine with 8 RTX3090 GPUs. All D4RL datasets use the v0 version. 

\subsection{Baselines}

\begin{table}[ht]
    \centering
    \begin{adjustbox}{width={0.8\textwidth},totalheight={\textheight},keepaspectratio}%
    \renewcommand{\arraystretch}{1.1}
    \begin{tabular}{cccccccc}
    \toprule
          & Walker2d & Hopper & Halfcheetah & Door & Hammer & Pen & Relocate \\
    \hline
         min\_q\_weight & 10 & 20& 20 & 20 & - & 50 & - \\
    \bottomrule
    \end{tabular}
    \end{adjustbox}
    \caption{Hyperparameter for CQL. We sweep it within the range of \{5, 10, 20, 50, 100\}.}
    \label{tab:hyperparameter_CQL}
\end{table}

\begin{table}[ht]
    \centering
    \begin{adjustbox}{width={0.8\textwidth},totalheight={\textheight},keepaspectratio}%
    \renewcommand{\arraystretch}{1.1}
    \begin{tabular}{cccccccc}
    \toprule
          & Walker2d & Hopper & Halfcheetah & Door & Hammer & Pen & Relocate \\
    \hline
         f\_reg & 1 & 1& 1 & 5 & 10 & 0.01 & 0.1 \\
    \bottomrule
    \end{tabular}
    \end{adjustbox}
    \caption{Hyperparameter for Fisher-BRC.}
    \label{tab:hyperparameter_Fisher}
\end{table}

\begin{itemize}
    \item{CQL.} We use a modular PyTorch implementation of CQL\footnote{Code and license: https://github.com/young-geng/cql}. We are very sorry that we cannot reproduce it on Adroit hammer and relocate tasks. To be more specific, for these omitted, the final D4RL normalized scores we got, acoss all swept paremeters, are about zero (random). We thus have to omit these irrational scores to prevent distress or offense to other readers and authors. Table \ref{tab:hyperparameter_CQL} shows the hyperparameters used in our experiments. 

    \item{BEAR-QL.} We use the recommended Github implementation~\footnote{Code and license: https://github.com/rail-berkeley/d4rl\_evaluations}. We follow the recommended settings for mujoco tasks, and for four Adroit tasks, we use the Gaussian kernel.
    
    \item{UWAC.} We use the official implementation~\footnote{Code and license: https://github.com/apple/ml-uwac}, with default hyperparameters.
    
    \item{IQL.} We use the authors' implementaion in JAX~\footnote{Code and license: https://github.com/ikostrikov/implicit\_q\_learning}, which is really really fast. 
    
    \item{Fisher-BRC.} We use the author's implementation~\footnote{Code and license: https://github.com/google-research/google-research/tree/master/fisher\_brc}.  We sweep the best hyperparameters for D4RL Adroit expert tasks and follow the suggested settings for D4RL mujoco tasks. 
\end{itemize}

\subsection{The proposed method}

\paragraph{Implementation.} 

We recommend interested readers to reproduce results of TD3BC++ on the top of TD3+BC~\footnote{Code and license: https://github.com/sfujim/TD3\_BC}, which is really a minimalist approach to offline RL. The proposed plugin involves two algorithmic modifications:

\lstset{style=mystyle}
\begin{lstlisting}[
language=Python, 
caption=The proposed two small changes on the top of TD3+BC. 
]
    # Compute critic loss
    critic_loss = F.mse_loss(current_Q1, target_Q) + F.mse_loss(current_Q2, target_Q)
+   if self.total_it % N == 0: # We empirically set N to 5.
+   	_state_rep = state.clone().detach().repeat(16, 1).requires_grad_(True)
+   	_random_action = torch.rand(
+   	    size=self.actor(_state_rep).size(), 
+   	    requires_grad=True
+   	    ) * 2 - 1.0
+   	_random_action= _random_action.to(device)
+   	_current_Q1, _current_Q2 = self.critic(_state_rep, _random_action)
+   	grad_q1_wrt_random_action = torch.autograd.grad(
+   		outputs=_current_Q1.sum(),
+   		inputs =_random_action,
+   		create_graph=True
+   		)[0].norm(p=2, dim=-1)
+       grad_q2_wrt_random_action = torch.autograd.grad(
+   		outputs=_current_Q2.sum(),
+   		inputs =_random_action,
+   		create_graph=True
+   		)[0].norm(p=2, dim=-1)
+   	grad_q_wrt_random_action = F.relu(grad_q1_wrt_random_action - self.k) **2 +\
+           F.relu(grad_q2_wrt_random_action - self.k) **2
+   	critic_loss = critic_loss +  grad_q_wrt_random_action.mean() * self.lambda_GP
...
    # Compute actor loss
-   # actor_loss = -lmbda * Q. mean() + F. mse_loss(pi, action)
+   current_Q = ((current_Q1 + current_Q2) * 0.5).squeeze().detach()
+   actor_loss = -lmbda * Q.mean() + \
+       (F.mse_loss(pi, action, reduction='none').mean(axis=-1) * current_Q).mean()
\end{lstlisting}

\paragraph{Hyperparameters used for experiments.} Our modification involves a weight factor $\lambda_{GP}$ for gradient penalty loss $\mathcal{L}_{GP}$. As for the backbone algorithm, TD3+BC, we find $\alpha$, a factor to control the strength of BC term in Equation \ref{eq:td3_bc}, affects performance the most.  \cite{fujimoto2021minimalist} use $\alpha=2.5$ for their experiments on D4RL mujoco gym tasks. However, we find it does not work for Adroit tasks. We sweep it within the range of \{0.05, 0.1, 0.2, 0.5, 1, 2, 2.5, 3, 4\} and select the maximum possible value that works. Note that, TD3+BC with a low value of $\alpha$ may degenerate to imitation (BC term will dominate the learning) rather than RL. 
We report the settings used for our experiments:
\begin{table}[!hp]
    \centering
    \begin{adjustbox}{width={0.8\textwidth},totalheight={\textheight},keepaspectratio}%
    \renewcommand{\arraystretch}{1.1}
    \begin{tabular}{ccccccccc}
    \toprule
        & & Walker2d & Hopper & Halfcheetah & Door & Hammer & Pen & Relocate \\
    \hline
        TD3+BC & $\alpha$ & 2.5 & 2.5 & 2.5 & 0.5 & 0.2 & 0.5 & 0.02 \\
    \hline
    \multirow{2}{*}{TD3BC++} & $\alpha$ & 2.5 & 2.5 & 2.5 & 0.5 & 0.2 & 0.5 & 0.02 \\
                              & $\lambda_{GP}$ & 1 & 1 & 1 & 1 & 1 & 1 & 0.1 \\

    \hline
    \multirow{1}{*}{BEAR++} & $\lambda_{GP}$ & 1 & 1 & 1 & 1 & 1 & 1 & 0.1 \\
    \bottomrule
    \end{tabular}
    \end{adjustbox}
    \caption{Hyperparameters for TD3+BC, TD3BC++, and BEAR++.}
    \label{tab:hyperparameter_++}
\end{table}

\paragraph{Hyperparameter study.} We fix the BC term $\alpha=2.5$ and vary the gradient penalty term $\lambda_{GP}$ in TD3BC++, sweeping on four different settings. Results are shown in figure \ref{fig:hyperparameter_study_appendix}. For the EM setting, a small GP term (0.01, 0.02, or 0.05) can have a stabilizing effect on training while an overlarge one would inhibit the learned Q-function. As for the difficult ER3, ER5, and ER7 settings, we recommend practitioners choose a medium value (1, 2, 5) to stabilize learning while avoiding making the Q-function too flat.

\begin{figure}
    \centering
    \includegraphics[width=1.0\linewidth]{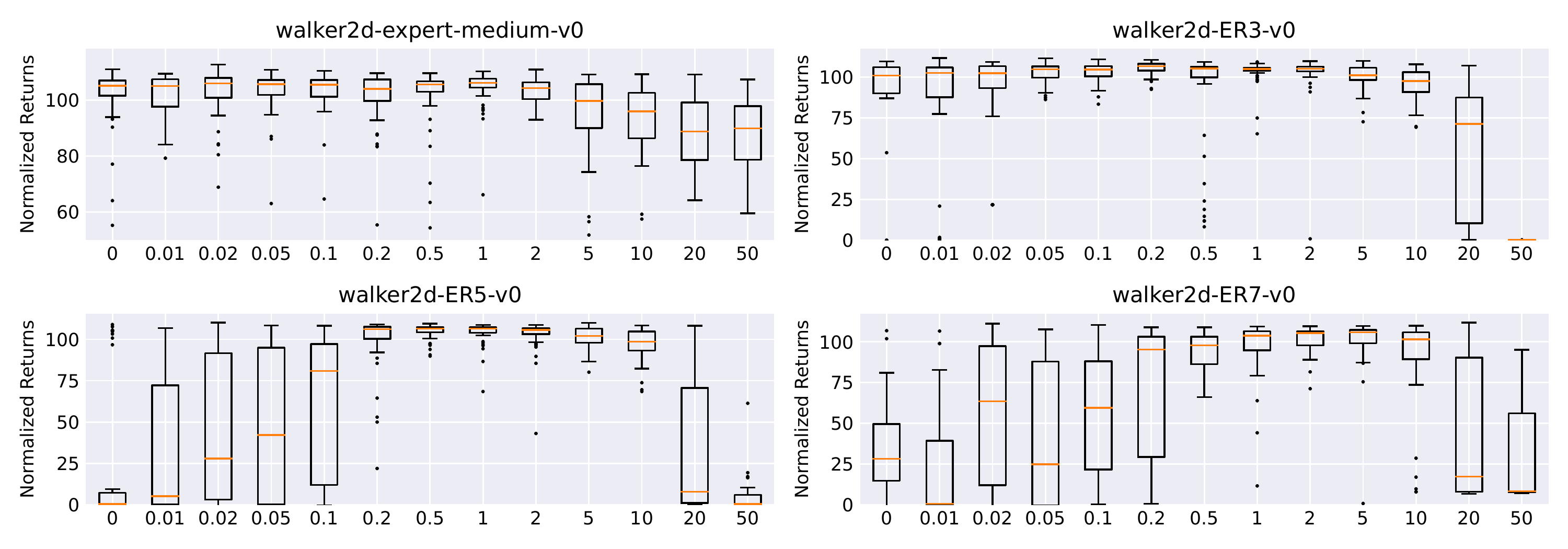}
    \caption{Hyperparameter study. Box plot. We run TD3BC++ with different $\lambda_{GP}$.}
    \label{fig:hyperparameter_study_appendix}
\end{figure}

\subsection{Evaluation on D4RL mujoco gym tasks.}
\label{app:evaluation_on_mujoco}
This work focuses on addressing the performance degradation and the catastrophic failure issues for policy constraint based offline RL algorithms. Therefore, we are more concerned with the performance on contaminated datasets with non-expert trajectories. In order to show the potential influence of the proposed methods, we report the results of BEAR++ and TD3BC++ on classic D4RL mujoco gym tasks, with hyperparameters in Table \ref{tab:hyperparameter_++}.

\vspace{-0.3cm}
\begin{table*}[!ht]
\centering
\caption{Evaluation on the original D4RL mujoco gym tasks.}
\label{tab:performance_on_original_mujoco_tasks}
\begin{adjustbox}{width={0.95\textwidth},totalheight={\textheight},keepaspectratio}%
\renewcommand{\arraystretch}{1.1}
\begin{tabular}{cc d{3.3}d{3.3}d{3.3}d{3.3}d{3.3}d{3.3}d{3.3}d{3.3}}
\toprule
\textbf{Task}   & \textbf{Setting}  & \multicolumn{1}{c}{\textbf{BC}}  & \multicolumn{1}{c}{\textbf{CQL}} & \multicolumn{1}{c}{\textbf{Fisher-BRC}} &  \multicolumn{1}{c}{\textbf{AWAC}} & \multicolumn{1}{c}{\textbf{BEAR}} & \multicolumn{1}{c}{\textbf{TD3+BC}} & \multicolumn{1}{c}{\textbf{BEAR ++}} & \multicolumn{1}{c}{\textbf{TD3BC ++}} \\
\hline
\parbox[t]{1mm}{\multirow{4}{*}{\rotatebox[origin=c]{90}{Halfcheetah}}}
            & Expert        & 105.20 & 82.40  & 108.40     & 78.50 & 103.77 & 105.70 & 104.53 & 105.87   \\
            & Medium-expert & 67.60  & 27.10  & 93.30      & 36.80 & 49.25  & 97.90  & 91.01  & 105.26   \\
            & Medium        & 36.60  & 37.20  & 41.30      & 37.40 & 37.09  & 42.80  & 36.85  & 40.78    \\
            & Random        & 2.00   & 21.70  & 33.30      & 2.20  & 2.26   & 10.20  & 2.25   & 6.98     \\
             \hline
\parbox[t]{1mm}{\multirow{4}{*}{\rotatebox[origin=c]{90}{Hopper}}}
            & Expert        & 111.50 & 111.20 & 112.30     & 85.20 & 61.50  & 112.20 & 111.36 & 112.23   \\
            & Medium-expert & 89.60  & 111.40 & 112.40     & 80.90 & 85.12  & 112.20 & 110.28 & 111.57   \\
            & Medium        & 30.00  & 44.20  & 99.40      & 72.00 & 37.89  & 99.50  & 39.84  & 30.78    \\
            & Random        & 9.50   & 10.70  & 11.30      & 9.60  & 10.22  & 11.00  & 10.04  & 10.58    \\
             \hline
\parbox[t]{1mm}{\multirow{4}{*}{\rotatebox[origin=c]{90}{Walker2d}}}        
            & Expert        & 56.00  & 103.80 & 103.00     & 57.00 & 75.13  & 105.70 & 97.20  & 104.68   \\
            & Medium-expert & 12.00  & 68.10  & 105.20     & 42.70 & 56.08  & 101.10 & 74.13  & 104.46   \\
            & Medium        & 11.40  & 57.50  & 78.80      & 30.10 & 57.87  & 79.70  & 62.46  & 75.79    \\
            & Random        & 1.20   & 2.70   & 1.50       & 5.10  & 3.27   & 1.40   & 19.90  & 5.26     \\
            \bottomrule
\end{tabular}
\end{adjustbox}
\end{table*}

We note that TD3BC++ performs much lower than TD3+BC on the Hopper medium task and returns to the original performance after reducing the strength of the GP term ($\lambda_{GP}=0.02$, Score=100.13, 5 seeds). This indicates that one needs to select a suitable $\lambda_{GP}$ value, based on the quality of the dataset and the difficulty of the task.

\section{Broader impact}
Policy constraint based offline RL is a crucial approach to data-driven decision-making machines. As it enjoys many advantages, such as easy implementation, small training costs, and no need for extensive domain knowledge, one can apply it to various scenarios. Therefore, we believe that this work will inevitably inherit the social impact of application contexts. 

It would be surprising to see that the proposed plugins alleviate the observed performance degradation and catastrophic failure issues for policy constrained offline RL. With them, one can make greater use of static demonstrations to obtain stronger agents. To this degree, we believe our social impact lies in expanding the applicability of policy constraint based offline reinforcement learning methods.
\end{document}